\newcommand{\loss}{\mathcal{L}}
\DeclareMathOperator{\Tr}{Tr}
\DeclarePairedDelimiter{\abs}{\lvert}{\rvert}
\DeclareMathOperator*{\argmin}{arg min}
\newcommand{\st}{\textup{s.t.}}
\theoremstyle{plain}
  \newtheorem{proposition}{Proposition}
  \newtheorem{lemma}{Lemma}
  \newtheorem{theorem}{Theorem}
  \newtheorem{conjecture}{Conjecture}
\theoremstyle{definition}
  \newtheorem{definition}{Definition}
  \newtheorem{example}{Example}
  \newtheorem{remark}{Remark}
\newcommand{\calL}{\mathcal{L}}
\newcommand{\calN}{\mathcal{N}}
\newcommand{\bbE}{\mathbb{E}}
\newcommand{\bbI}{\mathbb{I}}
\newcommand{\bbR}{\mathbb{R}}
\newcommand{\bu}{\boldsymbol{u}}
\newcommand{\bI}{\boldsymbol{I}}
\newcommand{\norm}[1]{\left\lVert#1\right\rVert}
\newcommand{\row}{\text{row}}
\title{Representation Learning Dynamics of Self-Supervised Models}
\author {
    Pascal M. Esser\equalcontrib\textsuperscript{\rm 1},
    Satyaki Mukherjee\equalcontrib\textsuperscript{\rm 1}
    Debarghya Ghoshdastidar\textsuperscript{\rm 1}
    }
\begin{document}

\maketitle

\begin{abstract}
Self-Supervised Learning (SSL) is an important paradigm for learning representations from unlabelled data, and SSL with neural networks has been highly successful in practice. However current theoretical analysis of SSL is mostly restricted to generalisation error bounds. In contrast, learning dynamics often provide a precise characterisation of the behaviour of neural networks based models but, so far, are mainly known in supervised settings. In this paper, we study the learning dynamics of SSL models, specifically representations obtained by minimising contrastive and non-contrastive losses. We show that a n{\"a}ive extension of the dymanics of multivariate regression to SSL leads to learning trivial scalar representations that demonstrates dimension collapse in SSL. Consequently, we formulate SSL objectives with orthogonality constraints on the weights, and derive the exact (network width independent) learning dynamics of the SSL models trained using gradient descent on the Grassmannian manifold. We also argue that the infinite width approximation of SSL models significantly deviate from the neural tangent kernel approximations of supervised models. We numerically illustrate the validity of our theoretical findings, and discuss how the presented results provide a framework for further theoretical analysis of contrastive and non-contrastive SSL.
\end{abstract}

\section{Introduction}

A common way to distinguish between learning approaches is to categorize them into unsupervised learning, which relies on a input data consisting of a feature vector $(x)$, and supervised learning which relies on feature vectors and corresponding labels $(x , y)$. However, in recent years, Self-Supervised Learning (SSL) has been established as an important paradigm between supervised and unsupervised learning as it does not require explicit labels but relies on implicit knowledge of what makes some samples semantically close to others. Therefore SSL builds on inputs and inter-sample relations $(x , x^+)$, where $x^+$ is often constructed through data-augmentations of $x$ known to preserve input semantics such as additive noise or horizontal flip for an image \cite{kanazawa2016warpnet, novotny2018self, gidaris2018unsupervised}. While the idea of SSL is not new  \cite{bromley1993signature}, recent deep SSL models have been highly successful in computer vision \cite{chen2020simple,caron2021emerging,Jing2019SelfSupervisedVF}, 
natural language processing \cite{ misra2020self,BERT2019}, speech recognition \cite{Steffen2019arxiv,Mohamed2022}.
Since the early works \cite{bromley1993signature}, methods for SSL have predominantly relied on neural networks however with a strong focus on model design with only little theoretical backing. 

The main focus of the theory literature on SSL has been either on providing generalization error bounds for downstream tasks on embeddings obtained by SSL \citep{Arora2019ATA,GeTFJ-2303-01566,BansalKB21,LeeLSZ21,SaunshiMA21,ToshK021,WeiXM21,0002NN22,ChenZXCD0TZC22}, or analysing the spectral and isoperimetric properties of data augmentation \citep{BalestrieroL22,HanYZ23,Zhuo0M023}. The latter approach also result in novel bounds on the generalisation error \citep{HaoChen2021ProvableGF,abs-2306-00788}. 
While generalisation theory remains one of the fundamental tools to characterise the statistical performance, it has been already established for supervised learning that classical generalisation error bounds do not provide a complete theoretical understanding and can become trivial in the context of neural network models \citep{ZhangBHRV17,NeyshaburBMS17}. 
Therefore a key focus in modern deep learning theory is to understand the learning dynamics of models, often under gradient descent, as they provide a more tractable expression of the problem that can be an essential tool to understand the loss landscape and convergence \citep{fukumizu1998dynamics,SaxeMG13,Pretorius2018LearningDO}, early stopping \citep{li2021implicit}, linearised (kernel) approximations \citep{Jacot2018Neurips,Du_2019_NEURIPS2019} and, mostly importantly, generalisation and inductive biases \citep{SoudryHNGS18,luo2018towards,HeckelY21}.

\emph{In this paper, we analyze the learning dynamics of SSL models under contrastive and non-contrastive losses \citep{Arora2019ATA,chen2020simple}, which we show to be significantly different from the dynamics of supervised models.} This gives a simple and precise characterization of the dynamics that can provide the foundation for future theoretical analysis of SSL models. Before presenting the learning dynamics, we recall the SSL principles and losses consisdered in this work.

\textbf{Contrastive Learning.}
Contrastive SSL has its roots in the work of \citet{bromley1993signature}. Recent deep learning based contrastive SSL show great empirical success in computer vision \citep{chen2020simple,caron2021emerging,Jing2019SelfSupervisedVF}, video data \citep{fernando2017self,sermanet2018time}, natural language tasks \citep{ misra2020self,BERT2019} and speech \citep{Steffen2019arxiv,Mohamed2022}. 
In general a contrastive loss is defined by considering an anchor image, $x \in \mathbb{R}^d$, positive samples $\{x^+\} \subset \mathbb{R}^d$ generated using data augmentation techniques as well as independent negative samples $\{x^-\} \subset \mathbb{R}^d$. The heuristic goal is to align the anchor more with the positive samples than the negative ones, which is rooted in the idea of maximizing mutual information between similar samples of the data. 
In this work, we consider a simple contrastive loss minimisation problem along the lines of \citet{Arora2019ATA}, assuming exactly one positive sample $x_i^+$ and one negative sample $x_i^-$ for each anchor $x_i$,\footnote{It is straightforward to extend our analysis to multiple positive and negative samples, but the expressions become cumbersome, without providing additional insights.}
 \begin{align}
    &\min_{\Theta} \sum^n_{i=1}  u(x_i)^\top\left( u(x_i^-) -  u(x_i^+)\right) \label{eq:contrastive loss function},
\end{align}
where $ u = [u_1(\cdot,\Theta) \ldots u_z(\cdot,\Theta)]^\top:\mathbb{R}^d\to\mathbb{R}^z$ is the embedding function, parameterized by $\Theta$, the learnable parameters.

\textbf{Non-Contrastive Learning}
Non-contrastive losses emerged from the observation that negative samples (or pairs) in contrastive SSL are not necessary in practice, and it suffices to maximise only alignment between positve pairs \cite{chen2020simsiam,chen2020simple,Grill2020Neurips}. 
 Considering a simplified version of the setup in  \cite{chen2020simple}
 one learns a representation by minimising the loss
 \footnote{We simplify \citet{chen2020simple} by replacing the cosine similarity with the standard dot product and also by replacing an additional positive sample $x_i^{++}$ by anchor $x_i$ for convenience.}
\begin{align}
    &\min_{\Theta} \sum^n_{i=1} -  u(x_{i})^\top  u(x_i^{+}).\label{eq:non-contrastive loss function}
\end{align}
The embedding  $ u = [u_1(\cdot,\Theta)\ldots u_z(\cdot,\Theta)]^\top:\mathbb{R}^d\to\mathbb{R}^z$, parametrised by $\Theta$, typically comprises of a {base encoder network}  and a {projection head} in practice \citep{chen2020simple}.

\textbf{Contributions.} 
The objective of this paper is to derive the evolution dynamics of the learned embedding $u=u(\cdot,\theta)$ under gradient flow for the constrastive \eqref{eq:contrastive loss function} and non-contrastive losses \eqref{eq:non-contrastive loss function}.
More specifically we show the following:
\begin{itemize}
     \item We express the learning dynamics for both contrastive and non-contrastive learning and show that, the evolution dynamics is same across dimensions. This explains why SSL is naturally prone to dimension collapse.      
     \item Assuming a 2-layer linear network, we show that dimension collapse cannot be avoided by adding standard Frobenius norm reguralisation or constraint, but by adding orthogonality or L2 norm constraints.
     \item We further show that at initialization, the dynamics of 2-layer network with nonlinear activation is close to their linear, width independent counterparts (Theorem~\ref{th:non-linear}). We also provide empirical evidence that the evolution of the infinite width non-linear networks are close to their linear counterparts, under certain conditions on the nonlinearity (that hold for tanh).
    \item  We derive the learning dynamics of SSL for linear networks, under orthogonality constraints (Theorem~\ref{th: linear dynamics}). 
    We further show the convergence of the learning dynamics for the one dimensional embeddings $(z=1)$.
    \item We numerically show, on the MNIST dataset, that our derived SSL learning dynamics can be solved significantly faster than training nonlinear networks, and yet provide comparable accuracy on downstream tasks.   
\end{itemize}

All proofs are provided in the appendix.

\textbf{Related works.}
Our focus is on the evolution of the learned representations, and hence, considerably different from the aforementioned literature on generalisation theory and spectral analysis of SSL. From an optimisation perspective, \citet{LiuLUT23} derive the loss landscape of contrastive SSL with linear models, $u(x) = Wx$, under InfoNCE loss \cite{abs-1807-03748}. 
Although the contrastive loss in \eqref{eq:contrastive loss function} seems simpler than InfoNCE, they are structurally similar under linear models \citep[see Eqns. 4--6]{LiuLUT23}.
Training dynamics for contrastive SSL with deep linear models have been partially investigated by \citet{Tian22}, who show an equivalence with principal component analysis, and by \citet{JingVLT22}, who establish that dimension collapse occurs for over-parametrised linear contrastive models.
Theorem \ref{th: linear dynamics} provides a more precise characterisation and  convergence criterion of the evolution dynamics than previous works. Furthermore, none of prior works consider non-linear models or orthogonality constraints as studied in this work.

We also distinguish our contributions (and discussions on neural tangent kernel connections) with the kernel equivalents of SSL studied in \citet{abs-2209-14884,0001HM23,ShahSCC22,abs-2302-02774}. While \citet{ShahSCC22,abs-2302-02774} specifically pose SSL objectives using kernel models, \citet{abs-2209-14884,0001HM23} show that contrastive SSL objectives induce specific kernels. Importantly, these works neither study the learning dynamics nor consider the neural tangent kernel regime. 

\textbf{Notation.}
Let $\bbI_n$ be an $n\times n$ identity matrix. For a matrix $A$ let $\norm{A}_F$ and $\norm{A}_2$ be the standard frobenious norm and the $L2$-operator norm respectively. 
The machine output is denoted by $u(\cdot)$. While $u$  is time dependent and should be more accurately denoted as $u_t$ we suppress the subscript where obvious. For any time dependent function, for instance $u$, denote $\mathring{u}$ to be its time derivative i.e. $\frac{d u_t}{dt}$.
$\phi$ is used to denote our non-linear activation function and we abuse notation to also denote its co-ordinate-wise application on a vector by $\phi( \cdot )$.
$\langle \cdot,\cdot \rangle$ is used to denote the standard dot product.

\section{Learning Dynamics of Regression and its N{\"a}ive Extension to SSL}
\label{sec: contrastive}

In the context of regression, \citet{Jacot2018Neurips} show that the evolution dynamics of (infinite width) neural networks, trained using gradient descent under a squared loss, is equivalent to that of specific kernel machines, known as the neural tangent kernels (NTK).
The analysis has been extended to a wide range of models, including convolutional networks \citep{Arora20219NeurIPS}, recurrent networks \citep{Alemohammad0BB21}, overparametrised autoencoders \citep{Nguyen_2021_IEEE},  graph neural networks \citep{Du_2019_NEURIPS2019,Sabanayagam2022} among others.
However, these works are mostly restricted to squared losses, with few results for margin loss \citep{ChenHNW21}, but derivation of such kernel machines are still open for contrastive or non-contrastive losses \eqref{eq:contrastive loss function}--\eqref{eq:non-contrastive loss function}, or broadly, in the context of SSL.
To illustrate the differences between regression and SSL, we outline the learning dynamics of multivariate regression with squared loss, and discuss how a n{\"a}ive extension to SSL is inadequate.

\subsection{Learning Dynamics of Multivariate Regression}

Given a training feature matrix $X:=\left[x_1,\cdots,x_n\right]^\top\in\bbR^{n\times d}$ and corresponding $z$-dimensional labels $Y:=\left[y_1,\cdots,y_n\right]^\top \in\bbR^{n\times z} $, consider the regression problem of learning a neural network function $u(x) = [u_1(x,\Theta) \ldots u_z(x,\Theta)]^\top$, parameterized by $\Theta$, by minimising the squared loss function
\begin{align*}
    \calL (\Theta):=\frac{1}{2}\sum_{i=1}^n \Vert u(x_i) - y_i\Vert^2.
\end{align*}
Under gradient flow, the evolution dynamics of the parameter during training is $\mathring{\Theta} = -\nabla_{\Theta}\loss$ and, consequently, the evolution of the $l$-th component of network output $\bu(x)$, for any input $x$, follows the differential equation
\begin{align}
\mathring{u}_l(x) 
&= \left\langle \nabla_\Theta u_l(x), \mathring{\Theta}\right\rangle \nonumber
\\&= - \sum_{i=1}^n\sum_{j=1}^z
\left\langle\nabla_\Theta u_l(x), \nabla_\Theta u_j(x_i)\right\rangle
(u_j(x_i) - y_{i,j}) .
\label{eq:regr-dynamics}
\end{align}

While the above dynamics apparently involve interaction between the different dimensions of the output $u(x)$, through $\left\langle\nabla_\Theta u_l(x), \nabla_\Theta u_j(x_i)\right\rangle$, it is easy to observe that this interaction does not contribute to the dynamics of linear or kernel models. We formalise this in the following lemma.

\begin{lemma}[\textbf{No interaction across output dimensions}]
\label{lem:dimension-interaction}
Let $u:\bbR^d\to\bbR^z$ be either a linear model $u(x) = \Theta x$, or a kernel machine $u(x) = \Theta \psi(x)$, where $\psi$ corresponds to the implicit feature map of a kernel $k$, that is, $k(x,x') = \langle \psi(x),\psi(x')\rangle$.
\\
Then in the infinite width limit ($h\rightarrow \infty$) the inner products between the gradients are given by
\begin{align*}
\left\langle\nabla_\Theta u_l(x), \nabla_\Theta u_j(x')\right\rangle = \left\{ \begin{array}{ll}
0 & \text{if } l \neq j,\\
x^\top x' & \text{if } l=j \text{ (linear case)},\\
k(x, x') & \text{if } l=j \text{ (kernel case)}.\end{array}\right.
\end{align*}
\end{lemma}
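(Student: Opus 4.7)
The statement asserts that the NTK-style gram object $\langle \nabla_\Theta u_l(x), \nabla_\Theta u_j(x')\rangle$ decouples across output coordinates, becoming diagonal in $(l,j)$. My plan is to verify this by a direct computation in each of the two cases, exploiting the fact that in the stated parametrisations the $l$-th output depends on a block of $\Theta$ that is disjoint from the block controlling the $j$-th output whenever $l\neq j$.

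For the linear model, write $\Theta\in\bbR^{z\times d}$ so that $u_l(x) = e_l^\top \Theta x$ depends only on the $l$-th row of $\Theta$. Then $\nabla_\Theta u_l(x) = e_l x^\top$, a matrix supported on row $l$. Taking the Frobenius inner product gives
\begin{align*}
\langle e_l x^\top,\, e_j x'^\top\rangle_F
 = \tr\!\bigl(x\, e_l^\top e_j\, x'^\top\bigr)
 = (e_l^\top e_j)(x^\top x')
 = \delta_{lj}\, x^\top x',
\end{align*}
which is exactly the claim. The off-diagonal vanishing is the algebraic fact that the subspaces of matrices supported on distinct rows are Frobenius-orthogonal.

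For the kernel machine the argument is identical after substituting $\psi(x)$ for $x$: $u_l(x)=e_l^\top \Theta\,\psi(x)$ yields $\nabla_\Theta u_l(x)=e_l\,\psi(x)^\top$, and the same trace computation delivers $\delta_{lj}\,\langle \psi(x),\psi(x')\rangle = \delta_{lj}\,k(x,x')$. The $h\to\infty$ caveat in the statement is just the clarification that, when one actually realises $u$ as a two-layer network with $h$ hidden units whose NTK converges to $k$, the last-layer weights for different output coordinates share no parameters, so the decoupling persists in the limit; one can appeal to the standard NTK scaling/independence argument to make this rigorous.

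Because the lemma is a book-keeping observation about which rows/columns of $\Theta$ feed into which output, there is no real obstacle; the only thing to be careful about is the kernel realisation of the infinite-width statement, and this can be handled by citing the standard NTK construction rather than redoing it. The takeaway the authors will then use is that \eqref{eq:regr-dynamics} reduces, for linear or kernel $u$, to $z$ decoupled scalar ODEs, one per output coordinate — the property they contrast with the SSL dynamics in the sequel.
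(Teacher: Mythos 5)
Your computation is correct, but it proves a slightly different (and in fact simpler) statement than the one the paper actually establishes, and the divergence is worth spelling out. You take the lemma at its literal word: the model is $u(x)=\Theta x$ or $u(x)=\Theta\psi(x)$ with a single trainable matrix $\Theta\in\bbR^{z\times d}$ and, in the kernel case, a \emph{fixed} feature map $\psi$. Under that reading, $\nabla_\Theta u_l(x)=e_l x^\top$ is supported entirely on row $l$ of $\Theta$, rows are Frobenius-orthogonal, and the claim follows with no analysis at all — there is no hidden width $h$ in the picture and no limit to take. The paper, by contrast, proves the lemma for the two-layer network $u(x)=W_2^\top\phi(W_1 x)$ with \emph{both} $W_1$ and $W_2$ trainable and $\Theta=(W_1,W_2)$. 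There the last-layer gradients still decouple by exactly your row-support argument (the columns $w_j$ of $W_2$ are disjoint across output dimensions), but there is a genuine extra interaction term $\langle\nabla_{W_1}u_l(x),\nabla_{W_1}u_j(x')\rangle$ coming from the shared first layer, and the real content of the paper's proof is that this term vanishes as $h\to\infty$ by the lazy-training/NTK result of \citet{0001ZB20} (their Lemma 1), with the surviving $l=j$ contribution converging to $x^\top x'$ or $k(x,x')$. Your closing sentence gestures at this ("one can appeal to the standard NTK scaling/independence argument"), which is the right pointer, but you don't carry it out; in the paper this is not a caveat but the central step, since it is precisely the justification for why the lemma — stated for linear and kernel models — is later applied to wide networks in Proposition~\ref{prop:dimension-collapse}. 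So: your proof is a clean, self-contained argument for the literal kernel/linear model, and buys elementary rigour with no external dependencies; the paper's proof handles the trainable-hidden-layer network the lemma is actually used for, at the cost of importing a nontrivial infinite-width result. Both are valid for what they respectively prove, but you should be aware that the $h\to\infty$ qualifier in the statement signals the latter setting is the intended one.
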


For infinite width neural networks, whose weights are randomly initialised with appropriate scaling,  \citet{Jacot2018Neurips} show that at, initialisation, Lemma \ref{lem:dimension-interaction} holds with $k$ being the neural tangent kernel. Approximations for wide neural networks further imply the kernel remains same during training \citep{0001ZB20}, and so Lemma \ref{lem:dimension-interaction} continues to hold through training.

\begin{remark}[\textbf{Multivariate regression $=$ independent univariate regressions}]
A consequence of Lemma \ref{lem:dimension-interaction} is that the learning dynamics \eqref{eq:regr-dynamics} simplifies to
\begin{align*}
\mathring{u}_l(x) 
&= - \sum_{i=1}^n
\left\langle\nabla_\Theta u_l(x), \nabla_\Theta u_l(x_i)\right\rangle
(u_l(x_i) - y_{i,l}),
\end{align*}
that is, each component of the output $u_l$ evolves independently from other $u_j, j\neq l$.
Hence, one may solve a $z$-variate squared regression problem as $z$ independent univariate problems.
We discuss below that a similar phenomenon is true in SSL dynamics with disastrous consequences.  
\end{remark}

\subsection{Dynamics of n{\"a}ive SSL has Trivial Solution}

We now present the learning dynamics of SSL with contrastive and non-contrastive losses in \eqref{eq:contrastive loss function}--\eqref{eq:non-contrastive loss function}.
For convenience, we first discuss the non-contrastive case. Assuming that the network function $u:\bbR^d\to\bbR^z$ is parametrised by $\Theta$, the gradient of the loss $\calL (\Theta) = \sum\limits_{i=1}^n - u(x_i)^\top u(x_i^+)$ is 
\begin{align*}
    \nabla_\Theta \calL (\Theta) = - \sum_{i=1}^n\sum_{j=1}^z u_j(x_i) \cdot \nabla_\Theta u_j(x_i^+) + u_j(x_i^+) \cdot \nabla_\Theta u_j(x_i)
\end{align*}
Hence, under gradient descent $\mathring{\Theta} = -\nabla_\Theta \calL$, the evolution of each component of $u(x)$, given by $\mathring{u}_l(x) = \left\langle \nabla_\Theta u_l(x), \mathring{\Theta}\right\rangle$ is
\begin{align}
\mathring{u}_l(x)
= \sum_{i=1}^n\sum_{j=1}^z
&\left\langle\nabla_\Theta u_l(x), \nabla_\Theta u_j(x_i)\right\rangle 
u_j(x_i^{+}) 
\nonumber
\\+ &\left\langle\nabla_\Theta u_l(x), \nabla_\Theta u_j(x_i^{+})\right\rangle 
u_j(x_i).
\label{eq:non-contrastive dynamics}
\end{align}
Similarly, in the case of contrastive loss \eqref{eq:contrastive loss function}, the learning dynamics of $u(x)$, for any input $x$, is similarly expressed by
\begin{align}
\mathring{u}_l(x) = \sum_{i=1}^n\sum_{j=1}^z
&\left\langle\nabla_\Theta u_l(x), \nabla_\Theta u_j(x_i)\right\rangle 
\nonumber
u_j(x_i^{+}) 
\\&+ \left\langle\nabla_\Theta u_l(x), \nabla_\Theta u_j(x_i^{+})\right\rangle 
u_j(x_i)
\nonumber\\
&-\left\langle\nabla_\Theta u_l(x), \nabla_\Theta u_j(x_i)\right\rangle 
u_j(x_i^{-}) 
\nonumber
\\&- \left\langle\nabla_\Theta u_l(x), \nabla_\Theta u_j(x_i^{-})\right\rangle 
u_j(x_i).
\label{eq:contrastive dynamics}
\end{align}

We note Lemma \ref{lem:dimension-interaction} depends only on the model and not the loss function, and hence, it is applicable for the SSL dynamics in \eqref{eq:non-contrastive dynamics}--\eqref{eq:contrastive dynamics}. However,  there are no multivariate training labels $y \in \bbR^z$ in SSL (i.e. $y=0$) that can drive the dynamics of the different components $u_1,\ldots,u_z$ in different directions, which leads to dimension collapse.

\begin{proposition}[\textbf{Dimension collapse in SSL dynamcis}]
\label{prop:dimension-collapse}
    Under the conditions of Lemma \ref{lem:dimension-interaction}, every component of the network output $u:\bbR^d\to\bbR^z$ has identical dynamics, and hence, identical fixed points. As a consequence, the output collapses to one dimension at convergence. 
    \\
    For linear model, $u(x) = \Theta x$, the dynamics of $u(x)$ is given by
    \begin{align*}
        \mathring{u}_l(x) = \sum_{i=1}^n (x^\top x_i) u_l(x_i^+) + (x^\top x_i^+) u_l(x_i)
    \end{align*}
    for the non-contrastive case, and
    \begin{align*}
        \mathring{u}_l(x) = \sum_{i=1}^n (x^\top x_i) \big(u_l(x_i^+)-u_l(x_i^-)\big) 
        + (x^\top x_i^+ - x^\top x_i^-) u_l(x_i)
    \end{align*}
    for the contrastive case. For kernel models, the dynamcis is similarly obtained by replacing each $x^\top x'$ by $k(x,x')$.
\end{proposition}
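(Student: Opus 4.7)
The plan is to read off the proposition as a direct consequence of Lemma~1 applied to the two evolution equations derived in the preceding subsection. Under the linear or kernel hypotheses of Lemma~1, the gradient inner product $\langle \nabla_\Theta u_l(x), \nabla_\Theta u_j(x')\rangle$ vanishes whenever $l \neq j$, so every summand with $j \neq l$ in the double sums of the non-contrastive and contrastive dynamics drops out, reducing $\sum_{i,j}$ to $\sum_i$. Replacing the surviving $j=l$ inner product by $x^\top x'$ (resp.\ $k(x,x')$) then immediately produces the closed-form ODEs for $\mathring{u}_l(x)$ displayed in the proposition for both losses; in the contrastive case the two extra summands pick up signs exactly as written.

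With the simplified ODEs in hand, I would emphasize that the right-hand side coefficients $x^\top x_i$, $x^\top x_i^+$ (and $x^\top x_i^-$ in the contrastive case) carry no dependence on the component index $l$; only the evaluations $u_l(x_i)$, $u_l(x_i^\pm)$ do. Collecting the training-set evaluations into a matrix $U(t) \in \bbR^{2n \times z}$ (or $\bbR^{3n \times z}$ in the contrastive case) whose $l$-th column stacks the values of $u_l$ at anchors and augmentations, the dynamics can be written columnwise as $\mathring{U} = M U$ for a single linear operator $M$ acting uniformly on every column. Thus each column follows the same linear ODE, differing only in its initial condition, and the fixed-point set $\ker M$ is identical across components. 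This is the ``identical dynamics, identical fixed points'' half of the proposition.

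To go from shared dynamics to collapse to one dimension at convergence, my plan is a standard spectral argument: the solution admits the expansion $u_l(t) = \sum_k c_{l,k} e^{\lambda_k t} w_k$, where $\{(\lambda_k,w_k)\}$ diagonalises $M$ and the coefficients $c_{l,k}$ depend only on the initialisation; when a single eigenvalue dominates (as is generically the case), every column of $U(t)$ aligns asymptotically with the same eigenvector $w_\star$, so $U(t)$ becomes rank-one in the limit and this pulls back through the linear or kernel map to a one-dimensional image of $u:\bbR^d \to \bbR^z$.

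The main obstacle I anticipate is the spectral step rather than the substitution. For the non-contrastive case $M$ is built out of Gram matrices of $\{x_i\}$ and $\{x_i^+\}$, and a short block analysis should expose a unique dominant eigenvalue with a strictly positive real part, giving a clean alignment argument; for the contrastive case, however, the negative-sample terms can make $M$ indefinite or non-diagonalisable, so ``collapse'' must be phrased in terms of the direction of possibly diverging trajectories rather than convergent fixed points. Since the structural half of the proposition, namely identical dynamics and shared fixed-point set, follows immediately from Lemma~1 without any spectral input, the plan is to present the rank-one conclusion as the generic consequence of a shared linear operator acting on a $z$-fold product of identical copies, and defer degenerate spectral configurations to a remark.
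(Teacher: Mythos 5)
Your proposal is correct and takes essentially the route the paper intends: apply Lemma~\ref{lem:dimension-interaction} to the evolution equations \eqref{eq:non-contrastive dynamics}--\eqref{eq:contrastive dynamics}, kill the $j\neq l$ terms, replace the surviving inner products by $x^\top x'$ (or $k(x,x')$), and observe that the resulting scalar ODE for $u_l$ has no dependence on the index $l$ except through the evaluations $u_l(x_i),u_l(x_i^\pm)$. Restricting to the training points makes this a closed linear system $\mathring{q}_l = M q_l$ with $q_l\in\bbR^{2n}$ (resp.\ $\bbR^{3n}$), with $M$ shared across $l$, which is exactly what ``identical dynamics'' means. The paper treats the proposition as an almost-immediate corollary of Lemma~\ref{lem:dimension-interaction} and does not spell out the spectral step; your $\mathring U = MU$ formulation and the rank-one alignment argument make that last clause (``collapses to one dimension at convergence'') precise, which is a genuine strengthening.

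Two small points worth registering. First, you are right to be wary of the ``at convergence'' phrasing: for both losses the objective is typically unbounded below (scaling $u\mapsto cu$, $c\to\infty$, decreases it), so $M$ has a positive leading eigenvalue and the columns $q_l(t)$ diverge; the collapse is collapse of \emph{direction}, not convergence to a common finite fixed point. This caveat applies to the non-contrastive case just as much as the contrastive one, so your framing should not single out the contrastive case as the problematic one. Second, ``identical dynamics $\Rightarrow$ identical fixed-point set'' does not by itself force rank-one limits if $\ker M$ has dimension $\geq 2$; your appeal to a simple dominant eigenvalue is the genuine missing ingredient, and it is correct to present it as the generic case and defer ties and non-diagonalisability to a remark, since the paper itself does not address these degeneracies.
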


By the extension of Lemma \ref{lem:dimension-interaction} to neural network and NTK dynamics, one can conclude that Proposition \ref{prop:dimension-collapse} and dimension collapse also happen for wide neural networks, when trained for the SSL losses in \eqref{eq:contrastive loss function}--\eqref{eq:non-contrastive loss function}.

\begin{remark}[\textbf{SSL dynamics for other losses}]
   One may argue that the above dimension collapse is a consequence of loss definitions in \eqref{eq:contrastive loss function}--\eqref{eq:non-contrastive loss function}, and may not exist for other losses. We note that \citet{LiuLUT23} analyse contrastive learning with linear model under InfoNCE, and the simplified loss closely resembles \eqref{eq:contrastive loss function}, which implies decoupling of output dimensions (and hence, dimension collapse) would also happen for InfoNCE. The same argument also holds for non-constrastive loss in \citet{chen2020simple}. However, for the spectral contrastive loss of \citet{HaoChen2021ProvableGF}, the output dimensions remain coupled in the SSL dynamics due to existing interactions $u(x_i)^\top u(x_i^-) $ on the training data.  
\end{remark}

\begin{remark}[\textbf{Projections cannot overcome dimension collapse}]
    \citet{JingVLT22} propose to project the representation learned by a SSL model into a much smaller dimension, and show that fixed (non trainable) projectors may suffice. For a linear model, this implies $u(x) = A\Theta x$, where $A \in \bbR^{r\times z}, r\ll z$ is fixed. It is straightforward to adapt the dynamics and Proposition \ref{prop:dimension-collapse} to this case, and observe that for any $r>1$, all the $r$ components of $u(x)$ have identical learning dynamics, and hence, collapse at convergence.
\end{remark}

\section{SSL with (Orthogonality) Constraints}\label{sec: constraints}

For the remainder of the paper, we assume that the SSL model $u:\bbR^d\to\bbR^z$ corresponds to a 2-layer neural network of the form
\begin{align*}
    x\in\bbR^{d}
    \ \xrightarrow{W_1}{} \ \bbR^{h}
     \ \xrightarrow{\phi( \cdot )}{} \ \bbR^{h}
    \  \xrightarrow{W_2^\top}{} \ u(x) = W_2^\top \phi(W_1 x)\in \bbR^z,
\end{align*}
where $h$ is the size of the hidden layer and $\Theta = (W_1,W_2^\top)$ are trainable matrices. 
Whenever needed, we use $u^\phi$ for the output to emphasize the nonlinear activation $\phi$, and contrast it with a 2-layer linear network $u^\bbI(x) = W_2^\top W_1x$.

Based on the discussion in the previous section, it is natural to ask how can the SSL problem be rephrased to avoid dimension collapse. An obvious approach is to add regularisation or constraints \citep{bardes2021vicreg,ermolov2021whitening,caron2020unsupervised}.
The most obvious regularisation or constraint on $W_1,W_2$ is entry-wise, such as on Frobenius norm. While there has been little study on various regularisations in SSL literature, a plethora of variants for Frobenius norm regularisations can be found for autoencoders, such as sum-regularsiation, $\Vert W_1\Vert_F^2+\Vert W_2 \Vert_F^2$, or product regularisation $\Vert W_2^\top W_1\Vert_F^2$ \citep{kunin19apmlr}. 

It is known in the optimisation literature that regularised loss minimisation can be equivalently expressed as constrained optimisation problems. In this paper, we use the latter formulation for convenience of the subsequent analysis.
The following result shows that Frobenius norm constraints do not prevent the output dimensions from decoupling, and hence, it is still prone to dimension collapse.

\begin{proposition}[\textbf{Frobenius norm constraint does not prevent dimension collapse}]
    Consider a linear SSL model $u^{\phi}(x) = W_2^\top \phi(W_1x)$. The optimisation problem
    \begin{align*}
        \min_{W_1,W_2} \calL (W_1, W_2) \quad \text{s.t.} \quad  \Vert W_1\Vert_F \leq c_1, \Vert W_2\Vert_F \leq c_2,
    \end{align*}
    where the loss $\calL$ is given by \eqref{eq:contrastive loss function} or \eqref{eq:non-contrastive loss function}, has a global solution $u(x) = [a(x) ~ 0 \ldots 0]^\top \in \bbR^z$. 
\end{proposition}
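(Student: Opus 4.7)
The plan is to exploit the column structure of $W_2$ together with the column-wise separability of the Frobenius constraint. Writing $W_2\in\bbR^{h\times z}$ in terms of its columns $w_1,\ldots,w_z\in\bbR^h$, we have $u_l(x) = w_l^\top \phi(W_1 x)$, and the non-contrastive loss decomposes as
\begin{align*}
\calL(W_1,W_2) = -\sum_{i=1}^n\sum_{l=1}^z \bigl(w_l^\top \phi(W_1 x_i)\bigr)\bigl(w_l^\top \phi(W_1 x_i^+)\bigr) = -\sum_{l=1}^z w_l^\top A(W_1)\, w_l,
\end{align*}
with $A(W_1)$ the symmetric part of $\sum_i \phi(W_1 x_i)\phi(W_1 x_i^+)^\top$. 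The contrastive loss admits the same structure with $A(W_1)$ replaced by a (generally indefinite) symmetric matrix $\tilde A(W_1)$ built from the anchor-positive and anchor-negative outer products. Crucially, $\Vert W_2\Vert_F^2 = \sum_l \Vert w_l\Vert^2$, so the constraint $\Vert W_2\Vert_F \leq c_2$ separates across columns.

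Fix any feasible $W_1$ and consider the inner minimization in $W_2$. Writing $s_l := \Vert w_l\Vert^2$, the Rayleigh-quotient bounds give $-w_l^\top A(W_1) w_l \geq -\lambda_{\max}(A(W_1))\,s_l$ and $w_l^\top \tilde A(W_1) w_l \geq \lambda_{\min}(\tilde A(W_1))\,s_l$. Summing over $l$ and using $\sum_l s_l \leq c_2^2$, the loss is bounded below by $c_2^2 \min(\mu(W_1),0)$, where $\mu(W_1)$ equals $-\lambda_{\max}(A(W_1))$ in the non-contrastive case and $\lambda_{\min}(\tilde A(W_1))$ in the contrastive case. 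This bound is attained by placing all of the budget on a single column: if $\mu(W_1)<0$, set $w_1 = c_2 v$ with $v$ the corresponding unit eigenvector and $w_2 = \cdots = w_z = 0$; if $\mu(W_1)\geq 0$, take $W_2 = 0$.

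For the outer problem, a minimizer $W_1^\star$ with $\Vert W_1^\star\Vert_F \leq c_1$ exists by compactness of the constraint set and continuity of $W_1 \mapsto \min_{W_2} \calL(W_1,W_2)$ (the extremal eigenvalue of a continuous matrix-valued function is continuous). Pairing $W_1^\star$ with the single-column $W_2^\star$ obtained above produces a global minimizer with output $u(x) = [\,w_1^{\star\top}\phi(W_1^\star x),\,0,\ldots,0\,]^\top$, which is of the claimed form $[a(x),0,\ldots,0]^\top$.

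The main obstacle is the contrastive case, since $\tilde A(W_1)$ is in general indefinite and one cannot simply argue by aligning $W_2$ with its ``largest direction''. This is handled uniformly by the Rayleigh-quotient bound: a quadratic form constrained to a Frobenius ball is extremised on a one-dimensional subspace regardless of the sign pattern of the matrix. Note finally that the argument never uses any property of $\phi$ beyond its being a fixed function, so the conclusion applies to both the linear ($\phi=\bbI$) and non-linear two-layer models.
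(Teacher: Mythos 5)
Your proof is correct and takes essentially the same route as the paper's: both observe that $\calL=\Tr(W_2^\top f(X,W_1)W_2)$ is a sum of quadratic forms, that the Frobenius-norm budget $\sum_l\Vert w_l\Vert^2\le c_2^2$ (equivalently $\sum_i\sigma_i^2\le c_2^2$) separates across those forms, and that a Rayleigh-quotient bound then forces the budget onto a single extremal direction of $f(X,W_1)$. The only differences are cosmetic but in your favour: you decompose along the columns of $W_2$ rather than the paper's eigendecomposition of $W_2W_2^\top$, which directly produces a minimiser with $w_2=\cdots=w_z=0$ and hence the claimed output form $[a(x),0,\ldots,0]^\top$ (the paper's rank-one $W_2W_2^\top$ argument needs an extra orthogonal rotation of the output coordinates to land in that exact form); and you make explicit the existence of an optimal $W_1^\star$ via compactness and continuity of $W_1\mapsto\min_{W_2}\calL$, a step the paper leaves implicit.
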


The above result precisely shows dimension collapse for linear networks $u^\bbI$ even with Frobenius norm constraints.
An alternative to Frobenius norm constraint can be to constrain the $L2$-operator norm. To this end, the following result shows that, for linear networks, the operator norm constraint can be realised in multiple equivalent ways.

\begin{proposition}[\textbf{Equivalence of operator norm and orthonognality constraints}]\label{prop:L2-ortho-equivalence}
Consider a linear SSL model $u^\bbI(x) = W_2^\top W_1x$, and let the loss $\loss(W_1,W_2)$ be given by either \eqref{eq:contrastive loss function} or \eqref{eq:non-contrastive loss function} whose general form is  $\loss(W_1,W_2) = \norm{W_2^\top W_1 C W_1^\top W_2}_2^2$, where $C$ has atleast one negative eigenvalue.
Then the following optimisation problems are equivalent:
\begin{align*}
    1. &\quad\min_{W_1,W_2} \frac{\calL (W_1, W_2)}{\norm{W_2}_2^2\norm{W_1}_2^2};\\
    2. &\quad\min_{W_1,W_2} \calL (W_1, W_2) \quad \text{s.t.} \quad  \Vert W_2\Vert_2 \leq 1, \ \Vert W_1\Vert_2 \leq 1;
    \\
    3. &\quad\min_{W_1,W_2} \calL (W_1, W_2) \quad \text{s.t.} \quad  \Vert W_2^\top W_1\Vert_2 \leq 1;\\
    4. &\quad\min_{W_1,W_2} \text{$\calL$} (W_1, W_2) \quad \text{s.t.} \quad  W_2^\top W_2 = {\bbI_z}, \ W_1^\top W_1 = {\bbI_d}.
\end{align*}
Additionally this regularization avoids dimension collapse.
\end{proposition}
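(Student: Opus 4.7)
The plan is to establish the chain $(1) \Leftrightarrow (2) \Leftrightarrow (3) \Leftrightarrow (4)$ in order, and then address the avoidance of dimension collapse. The common starting observation is that $\loss(W_1, W_2)$ depends on $(W_1, W_2)$ only through the product $M := W_2^\top W_1$, and is separately degree-$2$ homogeneous in each argument, i.e.\ $\loss(c W_1, W_2) = c^2 \loss(W_1, W_2)$ and likewise in $W_2$; hence the ratio $\loss/(\norm{W_1}_2^2 \norm{W_2}_2^2)$ is invariant under independent rescalings $W_i \mapsto c_i W_i$.

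For $(1) \Leftrightarrow (2)$: by scale invariance, the minimization in $(1)$ may be restricted to $\norm{W_1}_2 = \norm{W_2}_2 = 1$, where the ratio equals $\loss$. Conversely, since $C$ has a negative eigenvalue, $\loss$ attains strictly negative values, and degree-$2$ homogeneity forces the minimum of $(2)$ to be attained on the boundary $\norm{W_1}_2 = \norm{W_2}_2 = 1$. For $(2) \Leftrightarrow (3)$: one direction is submultiplicativity $\norm{W_2^\top W_1}_2 \le \norm{W_2}_2 \norm{W_1}_2$. For the converse, given any $(W_1, W_2)$ feasible for $(3)$, I would take the thin SVD $M = U \Sigma V^\top$ with $\norm{\Sigma}_2 \le 1$ and set $\tilde W_1 := \Sigma^{1/2} V^\top$, $\tilde W_2^\top := U \Sigma^{1/2}$, padded with zero rows up to hidden width $h$; these satisfy the constraints of $(2)$ with $\tilde W_2^\top \tilde W_1 = M$, so the same loss value is attained.

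The main technical step is $(3) \Leftrightarrow (4)$. The direction $(4) \Rightarrow (3)$ is immediate since column-orthonormal matrices have operator norm $1$. For the converse, I rewrite the loss as $\loss = \tr(M^\top M\, \tilde C)$ with $\tilde C := (C + C^\top)/2$, and apply a Courant-Fischer/Ky-Fan style argument on $\tr(\Gamma \tilde C)$ over PSD $\Gamma$ with $0 \preceq \Gamma \preceq \bbI_d$ and $\Rank(\Gamma) \le z$: the minimum is attained at $\Gamma = P$, the orthogonal projection onto the span of the $k$ most negative eigenvectors of $\tilde C$, with $k \le \min(d,z)$. Hence the optimal $M$ has singular values in $\{0, 1\}$ and factors as $M = U V^\top$ with column-orthonormal $U \in \bbR^{z \times k}$, $V \in \bbR^{d \times k}$. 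Extending $U, V$ to column-orthonormal $\tilde W_2 \in \bbR^{h \times z}$ and $\tilde W_1 \in \bbR^{h \times d}$ by appending orthogonal complements in $\bbR^h$ (possible provided $h \ge \max(d,z)$) yields a feasible point of $(4)$ attaining the same loss as the optimum of $(3)$.

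Finally, avoidance of dimension collapse follows because the optimal $W_2^\top W_1$ constructed above has rank $k$ equal to the number of negative eigenvalues of $\tilde C$; for generic data $k \ge 2$, so $u(x) = W_2^\top W_1 x$ spans a $k$-dimensional subspace rather than collapsing to a single direction, in contrast to the unconstrained setting of Proposition~\ref{prop:dimension-collapse}. The principal obstacle is the spectral characterization of the $(3)$-minimizer: this crucially uses the negativity assumption on $C$, since only then does pushing singular values of $M$ toward $1$ along negative-eigenvector directions of $\tilde C$ strictly decrease the loss, forcing the saturation property needed to realize the optimum with orthonormal columns.
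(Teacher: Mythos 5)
Your proof follows essentially the same approach as the paper's: the same chain $(1) \Leftrightarrow (2) \Leftrightarrow (3) \Leftrightarrow (4)$, the same use of the negative eigenvalue of $C$ to force boundary attainment, the same SVD-based reconstruction of feasible $(\hat{W}_1,\hat{W}_2)$ from a feasible product in $(2) \Leftrightarrow (3)$, and the same reduction of $(3) \Leftrightarrow (4)$ to minimizing $\Tr(\tilde{C}\,M^\top M)$ over $0 \preceq M^\top M \preceq \bbI$ of rank at most $z$, with the optimum a projection onto the most-negative eigenvectors (also the source of the non-collapse conclusion). The only cosmetic differences are your $\Sigma^{1/2}$ split in the $(2)\Rightarrow(3)$ construction (the paper keeps all of $\Sigma$ in one factor) and your explicit note of the width requirement $h \geq \max(d,z)$ for the padding/completion step, which the paper leaves implicit.
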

\begin{figure}[t]
    \centering
    \includegraphics[width = 0.47\textwidth]{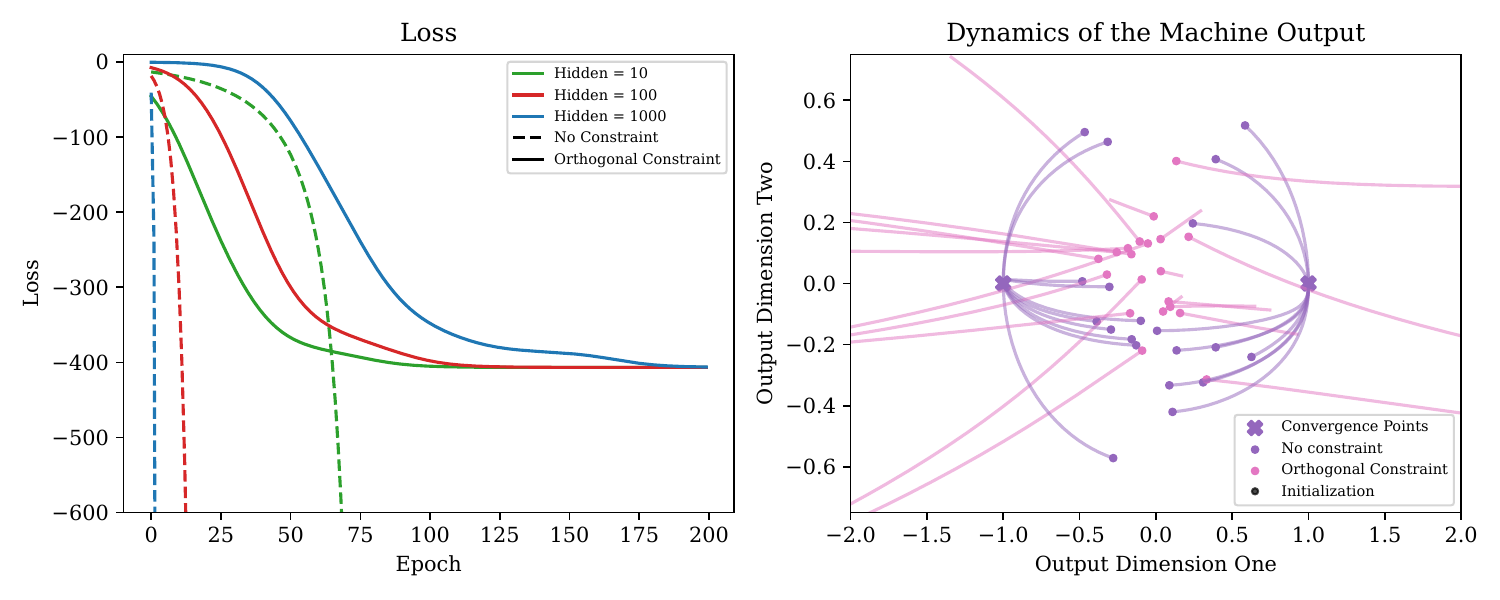}
    \caption{Comparison of gradient decent optimization with different regularisers \textbf{(left)} comparison of the loss function \textbf{(right)} comparison of the evolution of the outputs for the different considered constraints.
    }
    \label{fig:constraint comparison}
\end{figure}

Avoidance of dimensional collapse is also heuristically evident in the orthogonality constraint $W_2^\top W_2 = \bbI_z, \ W_1^\top W_1 = \bbI_d$, which we focus on in the subsequent sections. In particular we observe from the proof of Prop~\ref{prop:L2-ortho-equivalence} that this regularization extracts the eigenvectors of $C$ corresponding to its "most-negative" eigenvalues

\begin{example}[\textbf{SSL dynamics on half moons}]
We numerically illustrate the importance of constraints in SSL. We consider a contrastive setting (loss in \eqref{eq:contrastive loss function}) for the \emph{half moon} dataset \cite{scikit-learn}, where $x^-$ is an independent sample from the dataset and $x^+ = x + \varepsilon$ where $\varepsilon\sim\calN(0,0.1\bbI)$. 
Let us now compare the dynamics of $\calL$ (no constraints) and $ \calL_{orth}$, the scaling loss that corresponds to orthogonality constraints, and present the results in Figure~\ref{fig:constraint comparison}. We observe that under orthogonal constraints, independent of the initialization the function converges to fixed points (which we  theoretically show in Theorem~\ref{th: convergence}). On the other hand the dynamics for unconstrained loss $\calL$ diverge.   
\end{example}

\subsection{Non-Linear SSL Models are Almost Linear}

While the above discussion pertains to only linear models, we now show that the network, with nonlinear activation $\phi$ and orthognality constraints,
\begin{align*}
     u_{(t)}^{\phi}(x) &= W_2^\top \phi(W_1x)      
     &\st  ~ W_2^\top W_2 = \bbI_z, \ W_1^\top W_1 = \bbI_d,
\end{align*}
is almost linear. For this discussion, we explicitly mention the time dependence as a subscript $u_{(t)}^{\phi}$. 
We begin by arguing theoretically that in the infinite width limit at initialization there is very little difference between the output of the non-linear machine $u_{(0)}^{\phi}$ and that of its linear counterpart $u_{(0)}^{\bbI}$.

\begin{theorem}[\textbf{Comparison of Linear and Non-linear Network}]\label{th:non-linear}
Recall that  $u_{(t)}$ provides the output of the machine at time $t$ and therefore consider the linear and non-linear setting at initialization as
\begin{align}\label{eq: Th 2 machine u}
    u_{(0)}^{\bbI} &= W_2^\top W_1x
     &\st  ~ W_2^\top W_2 = \bbI_z, \ W_1^\top W_1 = \bbI_d;\\
    u_{(0)}^{\phi} &= W_2^\top \phi\left(W_1x\right)\nonumber
    &\st  ~ W_2^\top W_2 = \bbI_z, \ W_1^\top W_1 = \bbI_d\nonumber.
\end{align}
Let $\phi( \cdot )$ be an activation function, such that $\phi(0) = 0$,  $\phi'(0) = 1$, and $\abs{\phi''(\cdot)} \leq c.$ \footnote{This last assumption can also be weakened to say that $\phi''$ is continuous at $0$. See the proof of the theorem for details.} 
Then at initialization as uniformly random orthogonal matrices
\begin{align*}
    \norm{u_{(0)}^\phi - u_{(0)}^{\bbI}} ~ \leq ~ K c \norm{x}^2d\sqrt{\frac{\log^4 h}{h}}
\end{align*}
where $K$ is an universal constant $\phi$, $d$ is the feature dimension and $h$ the width of the hidden layer.
\end{theorem}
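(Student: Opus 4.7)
My plan is to reduce the bound to a Taylor-remainder estimate on the first layer and then control the resulting quantity via concentration of measure on the Stiefel manifold. Since both networks share the same second-layer weights,
\[
u_{(0)}^{\phi}(x) - u_{(0)}^{\bbI}(x) = W_2^\top\bigl(\phi(W_1 x) - W_1 x\bigr),
\]
and the orthogonality constraint $W_2^\top W_2 = \bbI_z$ implies $\norm{W_2^\top}_2 \le 1$, it suffices to bound $\norm{\phi(W_1 x) - W_1 x}$.

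Next, I would invoke a coordinatewise second-order Taylor expansion of $\phi$ about $0$. The hypotheses $\phi(0) = 0$, $\phi'(0) = 1$ and $|\phi''(\cdot)| \le c$ give the Lagrange remainder estimate $|\phi(t) - t| \le \tfrac{c}{2} t^2$. Combined with the orthogonality identity $\norm{W_1 x} = \norm{x}$ (from $W_1^\top W_1 = \bbI_d$), this yields
\[
\norm{\phi(W_1 x) - W_1 x}^2 \;\le\; \tfrac{c^2}{4}\sum_{i=1}^h (W_1 x)_i^4 \;\le\; \tfrac{c^2}{4}\,\norm{W_1 x}_\infty^2\norm{W_1 x}^2 \;=\; \tfrac{c^2\norm{x}^2}{4}\,\norm{W_1 x}_\infty^2,
\]
so the problem reduces to a probabilistic bound on $\norm{W_1 x}_\infty$ when $W_1$ is a random Stiefel matrix.

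For this probabilistic piece I would use the fact that, when $W_1$ is uniform on $V_d(\bbR^h)$, the vector $W_1 x/\norm{x}$ is uniform on $S^{h-1}$ and is equal in distribution to $Z/\norm{Z}$ with $Z \sim \calN(\mathbf{0}, \bbI_h)$. Standard Gaussian tail and norm-concentration estimates then give $\norm{W_1 x}_\infty \le \norm{x}\cdot O(\sqrt{\log h / h})$ with high probability. Substituting this into the previous display recovers a bound of the order $c\norm{x}^2\sqrt{\log h / h}$, which is the qualitative content of the theorem.

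The main technical obstacle is therefore not the overall strategy but the precise constants appearing in the statement, namely the extra factor of $d$ and the polylogarithmic rate $\sqrt{\log^4 h}$. A natural way these arise is through passing from the Stiefel matrix to a Gaussian proxy $G \in \bbR^{h\times d}$ with i.i.d.\ $\calN(0,1)$ entries via $W_1 = G\,(G^\top G)^{-1/2}$: controlling $\sigma_{\min}(G)^{-1}$ so that the inverse does not blow up and simultaneously controlling $\norm{Gx}_\infty$ by a union bound over $h$ coordinates naturally injects the factor $d$ and additional logarithmic terms (for instance through sub-exponential tail integration of $\norm{Gx}_\infty^2$). A secondary refinement, flagged in the footnote to the statement, is that one only needs continuity of $\phi''$ at $0$: since $\norm{W_1 x}_\infty$ is small with high probability, the uniform bound $|\phi''|\le c$ can be replaced by its local value near $0$, which only affects the universal constant $K$.
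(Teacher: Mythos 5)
Your proposal is correct and, in fact, tighter than the paper's own proof, reaching the same qualitative conclusion by a genuinely different probabilistic route. Both arguments share the first two moves: strip off $W_2$ via $\|W_2^\top\|_2\le1$, and invoke the Lagrange remainder $|\phi(t)-t|\le\tfrac{c}{2}t^2$. The divergence is in how the fourth moments of $W_1x$ are controlled. The paper bounds the maximum \emph{entry} of the random Stiefel matrix $W_1$ (their Lemma on $\max_{i,j}|Q_{i,j}|\lesssim\log h/\sqrt h$), deduces $\|\mathrm{row}_i(W_1)\|^2\lesssim d\log^2 h/h$, and then applies Cauchy--Schwarz on each row; this is where their extra factor of $d$ and the $\log^2 h$ come from, since the row-wise Cauchy--Schwarz ignores how $\mathrm{row}_i(W_1)$ actually aligns with $x$. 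You instead work directly with the vector $W_1x$: by left-invariance of the uniform Stiefel measure, $W_1x/\|x\|$ is uniform on $S^{h-1}$, so $\|W_1x\|_2=\|x\|$ exactly and $\|W_1x\|_\infty\lesssim\|x\|\sqrt{\log h/h}$ w.h.p.\ (representing the sphere as $Z/\|Z\|$ with $Z\sim\calN(0,\bbI_h)$). Combined with the elementary inequality $\sum_i a_i^4\le\|a\|_\infty^2\|a\|_2^2$, this yields $\|\phi(W_1x)-W_1x\|\lesssim c\|x\|^2\sqrt{\log h/h}$, which is dimension-free and implies the stated bound (since $d\ge1$ and $\log^4\ge\log$). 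The one inaccuracy is your closing speculation about where the paper's $d\,\sqrt{\log^4 h}$ arises: it has nothing to do with a Gaussian proxy $G(G^\top G)^{-1/2}$ or $\sigma_{\min}(G)^{-1}$; it is simply the cost of their entrywise-max plus row-wise Cauchy--Schwarz strategy, which your approach sidesteps. Filling in the standard Gaussian concentration for $\|Z\|_\infty/\|Z\|$ would make your argument fully rigorous, and the result would be a strict improvement on the published constant.
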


We furthermore conjecture that the same behaviour holds during evolution.
\begin{conjecture}[\textbf{Evolution of Non-linear Networks}]\label{conj: non-linear}
Consider the setup of Theorem~\ref{th:non-linear} with the linear $\left( u_{(t)}^{\bbI}\right)$ and non-linear machine $\left( u_{(t)}^\phi\right)$ as defined in \eqref{eq: Th 2 machine u} and an optimization of the general
\begin{align*}
&\min_{W_2W_1}\Tr\left(u_{(t)}^\top  u_{(t)}\right) 
    ~ ~ \st ~ ~ W_2^\top W_2 = \bbI_z \text{ and } W_1^\top W_1 = \bbI_d.\nonumber
\end{align*}
Again assume $\phi$ is an activation function, such that $\phi(0) = 0$ and $\phi'(0) = 1$.
Then 
\begin{align*}
   \norm{ u_{(t)}^\phi - u_{(t)}^{\bbI}} \rightarrow 0\quad \forall t>0 \text{ as } h\rightarrow\infty.
\end{align*}
\end{conjecture}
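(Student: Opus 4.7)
The plan is to bootstrap Theorem~\ref{th:non-linear} from $t=0$ to all finite times via a Grönwall-type argument on the Stiefel manifold. The key observation is that $\phi(s) = s + \tfrac{1}{2}\phi''(\xi)s^2$ for some $\xi$ between $0$ and $s$, so whenever the pre-activations $W_1x$ remain of magnitude $\tilde{O}(1/\sqrt{h})$ coordinate-wise, the nonlinear forward map differs from its linear counterpart by a residual that vanishes as $h\to\infty$. The proof should therefore reduce to (i) maintaining pre-activation smallness along the flow, and (ii) showing that this smallness implies that the two trajectories stay coupled.

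First I would write down the Riemannian gradient flow on $\mathrm{St}(h,d)\times\mathrm{St}(h,z)$ for both networks by projecting the Euclidean gradients via the standard tangent projection $\Pi_W(\xi) = \xi - W\,\mathrm{sym}(W^\top\xi)$, and couple the two flows by sharing the same Haar-random orthogonal initialization. Second, using the Taylor expansion above, I would decompose $\nabla_{W_i}\mathcal{L}^\phi = \nabla_{W_i}\mathcal{L}^\bbI + E_i$ where $\|E_i\|_F \lesssim c\,n\,\rho(W_1)^2$ with $\rho(W_1) := \max_{i,j}|(W_1 x_i)_j|$; since $\Pi_W$ is an orthogonal projection, the same bound transfers to the Riemannian gradients. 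Third, setting $\Delta(t) = \|W_1^\phi(t) - W_1^\bbI(t)\|_F + \|W_2^\phi(t) - W_2^\bbI(t)\|_F$ and using Lipschitzness of the projected gradients (which follows from compactness of the Stiefel manifold and boundedness of the data), I would derive a differential inequality $\dot{\Delta}(t) \leq L\,\Delta(t) + \tilde{O}(1/h)$. Grönwall's inequality combined with $\Delta(0) = 0$ would then give $\Delta(t) = o(1)$ uniformly on compact time intervals, and a final Lipschitz estimate on the forward map transfers the bound to $\|u_{(t)}^\phi - u_{(t)}^\bbI\|$.

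The main obstacle, and the reason this statement is posed as a conjecture, is the uniform-in-$t$ control of the pre-activation magnitude $\rho(W_1^\phi(t))$ along the flow. Unlike the unconstrained NTK regime in which weights barely move, the orthogonality constraint forces non-negligible motion on the Stiefel manifold whenever the loss has nontrivial gradient, so one cannot simply invoke ``weights stay near initialization.'' A promising route is to first establish the bound for the linear flow using the explicit form of Theorem~\ref{th: linear dynamics}: that dynamics amounts to rotating an initially Haar-distributed frame by a data-dependent orthogonal matrix, and such rotations preserve row-delocalization of $W_1$ in distribution. One would then propagate the delocalization from the linear to the nonlinear flow using $\Delta(t) = o(1)$ itself in a self-referential bootstrap. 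This avoids circularity only if the Grönwall forcing $\tilde{O}(1/h)$ is genuinely of smaller order than the $1/\sqrt{h}$ scale at which delocalization would break down, which is the quantitative heart of the conjecture and the step where I expect most of the technical work to reside.
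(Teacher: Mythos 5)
This statement is posed in the paper explicitly as a \emph{conjecture}: the authors give only numerical evidence (the tanh/ReLU/sigmoid comparisons of Figure~\ref{fig:linear non linear}) and no analytic argument. There is therefore no proof in the paper against which to compare your sketch; the right question is whether your route is credible and where it is incomplete, and on both counts your self-assessment is largely accurate.

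Your outline correctly locates the obstruction: controlling the row-delocalization $\rho(W_1^\phi(t)) := \max_{i,j}\lvert(W_1^\phi(t)\,x_j)_i\rvert$ uniformly along the constrained flow, which is what prevents the $t=0$ estimate of Theorem~\ref{th:non-linear} from being bootstrapped for free. Two points deserve tightening. First, the error decomposition $\nabla_{W_i}\calL^\phi = \nabla_{W_i}\calL^{\bbI} + E_i$ involves not only the forward-pass residual $\phi(s)-s = O(s^2)$ but also the Jacobian residual $\phi'(s)-1 = O(s)$, which appears when differentiating through the activation and is of strictly lower order in $\rho$; the Grönwall forcing is therefore more plausibly $\tilde O(\rho) = \tilde O(1/\sqrt{h})$ than $\tilde O(1/h)$. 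That may still suffice — Theorem~\ref{th:non-linear} itself only delivers a $\tilde O(1/\sqrt{h})$ rate at $t=0$ — but it narrows the margin in your bootstrap and you should not claim the stronger rate without checking. Second, your description of the linear flow as ``rotating an initially Haar-distributed frame by a data-dependent orthogonal matrix'' is not quite right: the projected flow $\mathring{W}_1 = -(\bbI - W_1 W_1^\top)\nabla_{W_1}\calL$ moves $W_1$ along the Stiefel manifold in a tangent direction orthogonal to its current column span, which is not a rotation of $W_1$ itself and need not preserve its entrywise distribution. What does help, and what a correct lemma would have to exploit, is that this tangent direction lies in the column span of $W_2$ (hence is at most $z$-dimensional) and that $W_2$ is itself delocalized at initialization; one must then show that delocalization of $W_2$ persists under its own flow and transfers to the increments of $W_1$. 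Making that argument rigorous, and breaking the apparent circularity of the self-referential bootstrap you describe, is precisely the technical gap the authors acknowledge by labelling the statement a conjecture rather than a theorem.
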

Numerical justification of the above conjecture is presented in the following section.

\subsection{Numerical Evaluation.} 
We now illustrate the findings of of Theorem~\ref{th:non-linear} and Conjecture~\ref{conj: non-linear} numerically. For evaluation we use the following experimental setup: We train a network with contrastive loss as defined in \eqref{eq:contrastive loss function} using gradient descent with learning rate $0.01$ for $500$ epochs and hidden layer size from $10$ to $2000$. We consider the following three loss functions: (1) sigmoid, (2) ReLU $(\phi(x) = \max\{x,0\})$ and (3) tanh.
The results are shown in Figure~\ref{fig:linear non linear} where the plot shows the average over $10$ initializations.
We note that \emph{tanh} fulfills the conditions on $\phi$ and we see that with increasing layer size the difference between linear and non-linear goes to zero. While \emph{ReLU} only fulfills $\phi(0) = 0$ the overall picture still is consistent with \emph{tanh} but with slower convergence. Finally the results on \emph{sigmoid} (which has a linear drift consistent with its value at $0$) indicate that the conditions on $\phi$ are necessary as we observe the opposite picture: with increased layer width the difference between linear and non-linear increases.

\begin{figure}[t]
    \centering
\includegraphics[width = 0.47\textwidth]{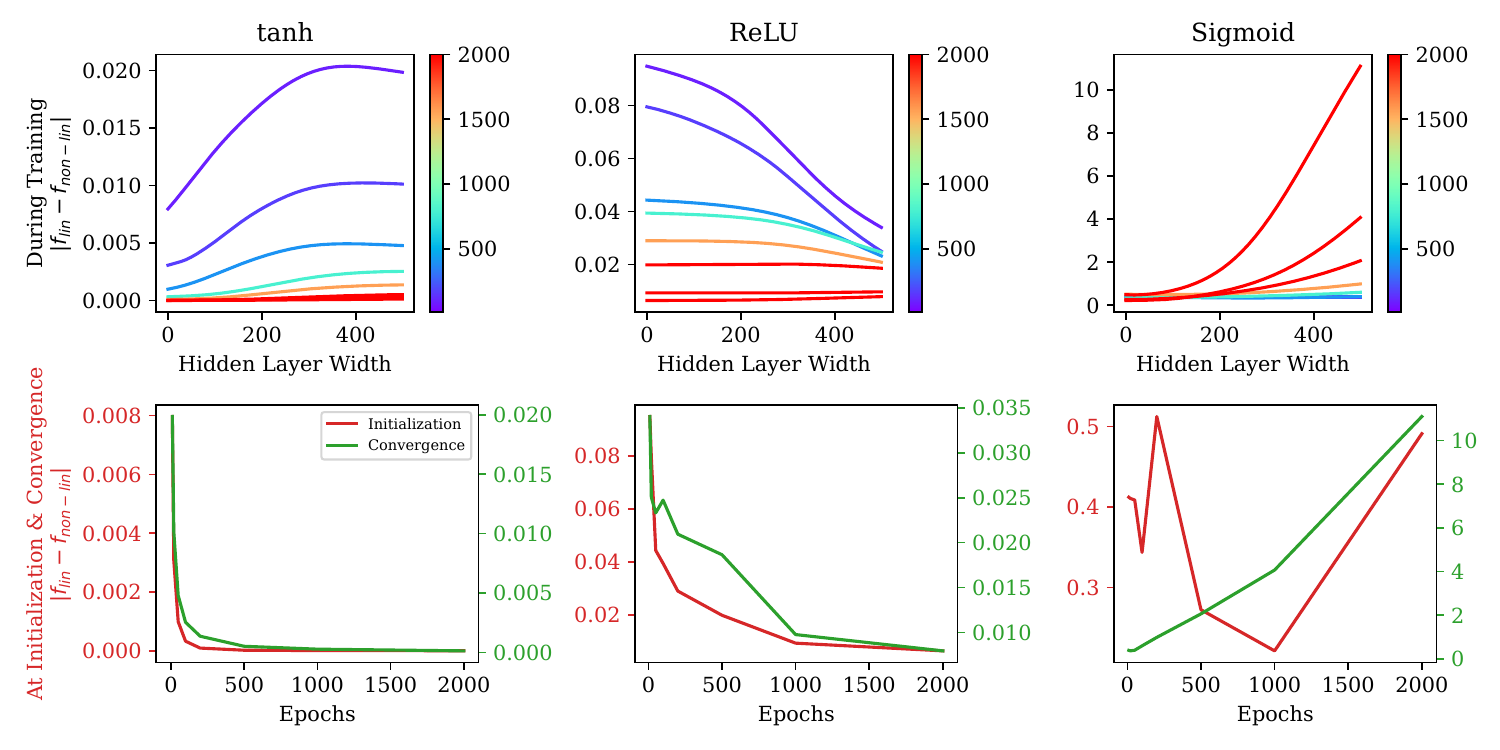}
    \caption{Difference between the non-linear output and the linear output under various conditions on the activation function. 
    \textbf{Row 1.} Change of the difference while training for hidden layer size $10$ to $2000$ (indicated by color bar). 
    \textbf{Row 2.} Difference at initialization and epoch $500$.}   \label{fig:linear non linear}
\end{figure}

\section{Learning Dynamics of Linear SSL Models}\label{sec: linear dynamics}

Having showed that the non-linear dynamics are close to the linear ones we now analyze the linear dynamics. We do so by first showing that the two SSL settings discussed in the introduction can be phrased as a more general trace minimization problem. From there we derive the learning dynamics and discuss the evolution of the differential equation. Furthermore we numerically evaluate the theoretical results and show that the dynamics coincide with learning the general loss function under gradient decent.

We can define a simple linear embedding function $u$ as: $u(x) = W_2^\top W_1x $ where the feature dimension is $d$ for $n$ data points. The hidden layer dimension is $h$ and embedding dimension $z$, such that the weights are given by $W_2\in\bbR^{h\times z}, W_1\in\bbR^{h\times d}$. Therefore we can write our loss function as
\begin{align*}
   \loss
   &=\sum^n_{i=n}\Tr\left(W_2^\top W_1x_i\left(x_i^- - x_i^+\right)^\top W_1^\top W_2\right)\\
   &=\Tr\left(W_2^\top W_1\widetilde{C}W_1^\top W_2\right)
   =\Tr\left(W_2^\top W_1{C}W_1^\top W_2 \right)
\end{align*}
with
\begin{align}\label{eq: simple contrastive C}
    C = \frac{ \widetilde{C} +  \widetilde{C}^\top }{2}
    ~ \text{ and } ~ \widetilde{C}= \sum^n_ix_i\left(x_i^- - x_i^+\right)^\top .
\end{align}
Furthermore \eqref{eq:contrastive loss function} can easily be extended to the $p$ positive and $q$ negative sample setting where we then obtain $\widetilde{C} = \sum_i^n\left( \sum_j^qx_i\left(x_j^-\right)^\top  - \sum_j^px_i\left(x_j^+\right)^\top \right).$
In addition we can also frame the previously considered non-contrastive model in \eqref{eq:non-contrastive loss function} in the simple linear setting by considering the general loss function with 
$\widetilde{C} = \sum_i^n x_i \left( x_i^{+}\right)^\top.$
We can now consider  the learning dynamics of models, that minimize objects of the form
\begin{definition}[General Loss Function]\label{def: general form}
Consider the following loss function
\begin{align}\label{eq: general form}
    &\calL_{W_2W_1}:=\Tr\left(W_2^\top W_1CW_1^\top W_2\right)\\
    &
 ~ ~ \st ~ ~ W_2^\top W_2 = \bbI_z \text{ and } W_1^\top W_1 = \bbI_d.\nonumber
\end{align}
where $W_1\in\bbR^{h\times d}$ and $W_2\in\bbR^{h\times z}$ are the trainable weight matrices. $C\in\bbR^{d\times d}$ is a symmetric, data dependent matrix.
\end{definition}

With the general optimization problem set up we can analyze \eqref{eq: general form} by deriving the dynamics under orthogonality constraints on the weights, which constitutes gradient descent on the Grassmannian manifold.
While orthogonality constraints are easy to initialize the main mathematical complexity arises from ensuring that the constraint is preserved over time. Following \cite{Zehua2020}, we do so by ensuring that the gradients lie in the tangent bundle of orthogonal matrices. 

\subsection{Theoretical Analysis}
In the following we present the dynamics in Theorem~\ref{th: linear dynamics}, followed by the analysis of the evolution of the dynamics in Theorem~\ref{th: convergence}.
\begin{theorem}[\textbf{Learning Dynamics in the Linear Setting}]\label{th: linear dynamics}
Let us recall the the general linear trace minimization problem stated in \eqref{eq: general form}:
\begin{align*}
\min_{W_2W_1}\Tr\left(W_2^\top W_1CW_1^\top W_2\right) 
    ~ ~ \st ~ ~ W_2^\top W_2 = \bbI_z \text{ and } W_1^\top W_1 = \bbI_d.\nonumber
\end{align*}
where $W_1\in\bbR^{h\times d}$ and $W_2\in\bbR^{h\times z}$ are the trainable weight matrices and $C\in\bbR^{d\times d}$ a symmetric, data dependent matrices, such that $C = V{\Lambda}V^\top $ with $V: = \left[v_1,\dots,v_d\right]$. Then with $q:=\left[u^{\bbI}(v_1),\cdots,u^{\bbI}(v_d)\right]^\top $, where $u$ represents the machine function i.e. $u^{\bbI}(x) = W_2^\top W_1 x$, the learning dynamics of $q$, the machine outputs are given by
\begin{align}\label{eq:differential eq}
\mathring{q} = -2\big[2 \Lambda q -  \Lambda q q^\top q  -  q q^\top \Lambda q\big] .
\end{align}
\end{theorem}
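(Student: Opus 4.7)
The strategy is to carry out constrained gradient flow directly on the Stiefel manifolds $\{W_1: W_1^\top W_1 = \bbI_d\}$ and $\{W_2 : W_2^\top W_2 = \bbI_z\}$, then change coordinates from $(W_1,W_2)$ to $q = V^\top W_1^\top W_2 \in \bbR^{d\times z}$, whose $i$-th row is $u^{\bbI}(v_i)^\top$. Since $V$ is orthogonal ($VV^\top = V^\top V = \bbI_d$), this change of variables is invertible and captures the full information of $W_2^\top W_1$.

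First I would compute the Euclidean gradients of $\loss = \Tr(W_2^\top W_1 C W_1^\top W_2)$. Using $C^\top = C$ these come out to $\nabla_{W_1}\loss = 2 W_2 W_2^\top W_1 C$ and $\nabla_{W_2}\loss = 2 W_1 C W_1^\top W_2$. Next, following \cite{Zehua2020}, I would project these onto the tangent bundle of the constraint set via the Grassmann-style projector $P_W(G) = (\bbI_h - W W^\top) G$. A one-line check using $W^\top W = \bbI$ shows $\tfrac{d}{dt}(W^\top W) = 0$ along the flow $\mathring{W} = -P_W(\nabla\loss)$, so orthogonality is preserved. The resulting constrained flow is
\begin{align*}
\mathring{W}_1 &= -2(\bbI_h - W_1 W_1^\top)\, W_2 W_2^\top W_1 C, \\
\mathring{W}_2 &= -2(\bbI_h - W_2 W_2^\top)\, W_1 C W_1^\top W_2.
\end{align*}

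Now I would differentiate $q = V^\top W_1^\top W_2$ to get $\mathring{q} = V^\top \mathring{W}_1^\top W_2 + V^\top W_1^\top \mathring{W}_2$. Substituting the expressions for $\mathring{W}_1,\mathring{W}_2$ and repeatedly applying the identities $W_2^\top W_2 = \bbI_z$, $W_1^\top W_1 = \bbI_d$, $V V^\top = \bbI_d$, $V^\top C = \Lambda V^\top$, together with the key rewriting $W_2^\top W_1 W_1^\top W_2 = W_2^\top W_1 (V V^\top) W_1^\top W_2 = q^\top q$, each term collapses into expressions in $q$ and $\Lambda$ alone. A short calculation gives $V^\top \mathring{W}_1^\top W_2 = -2\Lambda q + 2\Lambda q q^\top q$ and $V^\top W_1^\top \mathring{W}_2 = -2\Lambda q + 2 q q^\top \Lambda q$; summing yields the stated ODE $\mathring{q} = -2[2\Lambda q - \Lambda q q^\top q - q q^\top \Lambda q]$.

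The main technical subtlety is the choice of Riemannian projector: several natural projections exist for Stiefel/Grassmann manifolds (embedded, canonical, with or without a skew-symmetric component), and only the symmetric projector $\bbI_h - WW^\top$ used above produces the symmetric right-hand side of the ODE. Once this choice is justified (via the preservation check above and by appealing to \cite{Zehua2020}), the rest of the argument is a bookkeeping exercise, whose single organising trick is to insert $VV^\top = \bbI_d$ at the correct places so that every remaining factor can be rewritten in terms of $q$, $q^\top$, and $\Lambda$.
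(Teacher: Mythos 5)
Your proposal is correct and essentially the same as the paper's proof. The paper also writes the constrained flow as $\mathring{W}_i = -(\bbI - W_iW_i^\top)\nabla_{W_i}\loss$ (citing \citet{edelman1998geometry} for the $\bbI - WW^\top$ projector), differentiates $u(x) = W_2^\top W_1 x$, simplifies using $W_1^\top W_1 = \bbI_d$, $W_2^\top W_2 = \bbI_z$ and the insertion $\bbI_d = \sum_i v_iv_i^\top$, and then sets $x = v_j$ to assemble the matrix ODE. The only cosmetic difference is that you work directly with $q = V^\top W_1^\top W_2$ as a single $d\times z$ object and split $\mathring{q}$ into the two contributions $V^\top \mathring{W}_1^\top W_2$ and $V^\top W_1^\top \mathring{W}_2$, whereas the paper evaluates $\mathring{u}(v_j)$ entry by entry before stacking; your intermediate identities $W_2^\top W_1 W_1^\top W_2 = q^\top q$ and $W_2^\top W_1 C W_1^\top W_2 = q^\top\Lambda q$, and the two partial results $V^\top\mathring W_1^\top W_2 = -2\Lambda q + 2\Lambda q q^\top q$ and $V^\top W_1^\top \mathring W_2 = -2\Lambda q + 2 q q^\top \Lambda q$, all check out. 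Your remark on the choice of projector is apt: the paper likewise uses the Grassmannian-style horizontal projector rather than the skew-symmetric Stiefel tangent projector, and this choice is what makes the resulting ODE take the stated symmetric form.
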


Similar differential equations to \eqref{eq:differential eq} have been analysed in \cite{Ojariccati} and \cite{fukumizu1998dynamics}. The typical way to find stable solutions to such equations involve converting it to a differential equation on $qq^\top$. This gives us a matrix riccati type equation. For brevity's sake we write below a complete solution when $z=1$.

\textbf{Evolution of the differential equation.}
While the above differential equation doesn't seem to have a simple closed form, a few critical observations can still be made about it - particularly about what this differential equation converges to. 
As observed in Figure~\ref{fig:NN comparison} (right), independent of initialisation we converge to either of two points. In the following we formalise this observation. 

\begin{theorem}[Evolution of learning dynamics in \eqref{eq:differential eq} for $z=1$]\label{th: convergence}
Let $z=1$ then our update rule simplifies to 
\begin{align}\label{eq: differential rewritten}
    \frac{\mathring{q}}{2} = -(1-q^\top q)\Lambda q-(\bbI-q q^\top)\Lambda q.
\end{align}
We can distinguish two cases:
\begin{itemize}
    \item Assume all the eigenvalues of $\Lambda$ are strictly positive then $q$ converges to $0$.
    \item Assume there is at least one negative eigenvalue of $\Lambda$, then $q$ becomes the smallest eigenvector, $e_1$.
\end{itemize}
\end{theorem}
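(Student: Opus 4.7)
My plan is to reduce the vector ODE to a system of scalar equations in the eigenbasis of $\Lambda$ and use the loss $r := q^\top \Lambda q$ itself as a Lyapunov function. Since $\Lambda$ is already diagonal with eigenvalues $\lambda_1 \le \lambda_2 \le \cdots \le \lambda_d$ and standard-basis eigenvectors $e_1,\ldots,e_d$, decomposing $q = \sum_i \alpha_i e_i$ turns \eqref{eq: differential rewritten} into the coordinate-wise equation $\mathring{\alpha}_i = 2\alpha_i[r - (2-s)\lambda_i]$, where $s := \|q\|^2 = \sum_i \alpha_i^2$. Differentiating $s$ and $r$ along the flow yields the two scalar evolutions that drive the argument: $\mathring{s} = 8r(s-1)$ and $\mathring{r} = 4[r^2 - (2-s)\,p]$ with $p := \|\Lambda q\|^2$. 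A Cauchy--Schwarz inequality gives $r^2 \le sp$, and the orthogonality constraints imply $s = \|W_1^\top W_2\|^2 \le 1$ throughout the flow, so $\mathring{r} \le 8p(s-1) \le 0$, confirming that $r$ is non-increasing. Setting $\mathring{q}=0$ and taking an inner product with $q$ forces $r(1-s)=0$; a short case check then shows the fixed-point set is exactly $\{0\} \cup \{\pm e_i : 1 \le i \le d\}$.

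For Case~1 (all $\lambda_i > 0$), I promote $s$ itself to the Lyapunov function. Since $r \ge \lambda_{\min}\, s > 0$ whenever $q \ne 0$, the identity $\mathring{s} = 8r(s-1)$ gives $\mathring{s} \le 8\lambda_{\min}\, s\,(s-1) \le 0$, and for any initial $s(0)<1$ the sublevel set $\{s \le s(0)\}$ is forward-invariant. Hence $\mathring{s} \le -\kappa\, s$ along the trajectory with $\kappa := 8\lambda_{\min}(1-s(0))>0$, and Gr\"onwall yields $s(t) \le s(0)\,e^{-\kappa t}$, so $q(t) \to 0$ exponentially.

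For Case~2 (at least one $\lambda_i < 0$), assume WLOG that the smallest eigenvalue $\lambda_1$ is simple and strictly negative, and consider a generic initialization with $\alpha_1(0)\neq 0$. The decisive observation is that the ratios $\rho_i := \alpha_i/\alpha_1$ for $i\ge 2$ satisfy $\tfrac{d}{dt}\log|\rho_i| = -2(2-s)(\lambda_i - \lambda_1) \le -2(\lambda_i - \lambda_1) < 0$, using $s \le 1 < 2$. Hence each $\rho_i(t)\to 0$ exponentially and $q/\|q\|$ aligns with $\pm e_1$. Once the other components are negligible, $r \approx \lambda_1\,\alpha_1^2 \approx \lambda_1\, s$, so $\mathring{s} \approx 8\lambda_1\, s\,(s-1) > 0$ (using $\lambda_1 < 0$ and $s < 1$), which drives $s \to 1$. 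Combined with the directional convergence, this yields $q(t) \to \pm e_1$.

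The main obstacle, in my view, is upgrading Case~2 from the asymptotic sketch above to a rigorous convergence proof, since a priori $r$-monotonicity only pins down the $\omega$-limit set and not a single equilibrium. The cleanest fix is a LaSalle-type invariance argument together with a linear-stability analysis at each fixed point $\pm e_i$: the direction $e_1$ is always a strict descent direction for $r$, so $q=0$ and each $\pm e_i$ with $i \ne 1$ are unstable saddles, leaving $\pm e_1$ as the only attractors (with basin determined by the sign of $\alpha_1(0)$). The measure-zero exceptional locus $\{\alpha_1(0)=0\}$ is trapped in the invariant subspace $e_1^\perp$ and is handled by inducting the same dichotomy on the restricted eigen-structure.
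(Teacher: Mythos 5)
Your proposal is correct, and for Case~2 it takes a genuinely different route from the paper. In Case~1 you use the same Lyapunov function $s=\|q\|^2$ with $\mathring{s}=8r(s-1)$, $r>0$, $s\le 1$, as the paper does, but you add a Gr\"onwall step that turns the paper's qualitative monotonicity into an explicit exponential rate. In Case~2 the paper tracks the single scalar $e_1^\top q$: its derivative is $e_1^\top q$ times the nonnegative multiplier $2\bigl[(1-s)(-\lambda_1)+(r-\lambda_1)\bigr]$, so $|e_1^\top q|$ is monotone and bounded by $1$, converges, and the limiting multiplier must vanish, which occurs only at $q=\pm e_1$. You instead decompose in the eigenbasis of $\Lambda$ and show $\tfrac{d}{dt}\log|\alpha_i/\alpha_1|=-2(2-s)(\lambda_i-\lambda_1)\le -2(\lambda_i-\lambda_1)<0$ for $i\ge 2$, giving exponential directional collapse onto $\Span(e_1)$, and then argue $s\to 1$ via $\mathring{s}=8r(s-1)$ once $r\approx\lambda_1 s<0$. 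The paper's route is more economical (one monotone scalar carries both direction and magnitude), whereas yours decouples direction from magnitude, delivers explicit rates $e^{-2(\lambda_i-\lambda_1)t}$ for the off-axis coordinates, and correctly surfaces both the genericity hypothesis $\alpha_1(0)\ne 0$ and the sign ambiguity $\pm e_1$, which the paper passes over. The gap you flag in Case~2 is real but is shared by the paper's own argument; in your coordinate framework you can close it without LaSalle by noting $\tfrac{d}{dt}\log|\alpha_1|=2[r-(2-s)\lambda_1]\ge 4\lambda_1(s-1)\ge 0$ (using $r\ge\lambda_1 s$, $\lambda_1<0$, $s\le 1$), so $|\alpha_1|$ is nondecreasing and bounded by $1$; hence the nonnegative multiplier is integrable and, being uniformly continuous on the compact invariant set $\{s\le 1\}$, tends to $0$, which together with $\rho_i\to 0$ forces $s\to 1$ and $q\to\pm e_1$.
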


The requirement of negative eigenvalues of $C$ for a non-trivial convergence might be surprising however we can observe this when considering $C$ in expectation. Let us assume $C$ is constructed by \eqref{eq: simple contrastive C} and note that $\bbE[\widetilde{C}]= \bbE \big[\sum^n_ix_i\left(x_i^- - x_i^+\right)^\top \big].$ While this already gives a heuristic of what is going on, for some more precise mathematical calculations, we can specialise to the situation where $x^-$ is given by an independent sample and $x^+$ is given by adding a noise value $\epsilon$ sampled from $N(0,\sigma \bbI)$, i.e. $x^+ = x +\epsilon$. Then 
\begin{align*}
    \bbE[\widetilde{C}]= \sum^n_{i=1}\bbE [x_i] \bbE[{x_i^-}^\top] - \bbE[x_i{x_i^+}^\top]  
    = - n \bbE[xx^\top ].
\end{align*}
Thus $\bbE[C]$ is in fact negative definite.

\textbf{New Datapoint}.
While the above dynamics provide the setting during training we can furthermore investigate what happens if we input a new datapoint or a testpoint to the machine. Because $u$ is a linear function and because $v_1, ..., v_d$ is a basis this is quite trivial. So if $\hat{x}$ is a new point, let $\alpha = (\alpha_1, ..., \alpha_d)^\top$ be the co-ordinates of $\hat{x}$, i.e. $\hat{x} = \sum_i^d \alpha_i v_i$ or $\alpha = V^\top \hat{x}$. Then \begin{align*}
    u_t(\hat{x}) = u_t\left(\sum_i^d \alpha_i v_i\right) = \sum_i^d \alpha_i u_t(v_i) = q_t^\top \alpha =  q_t^\top V^\top \hat{x}.
\end{align*}

\begin{figure}[t]
    \centering
    \includegraphics[width = 0.47\textwidth]{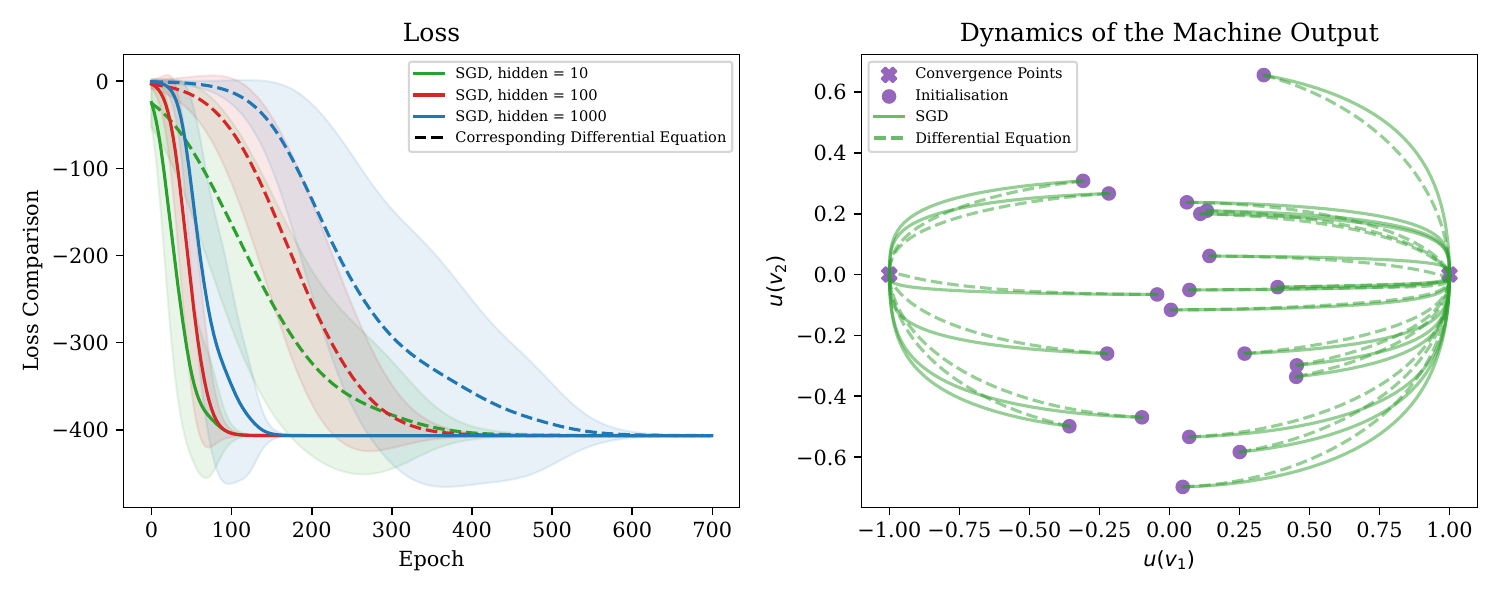}
    \caption{Comparison of gradient decent optimization and differential equation. \textbf{(left)} comparison of the loss function \textbf{(right)} comparison of the outputs.}
    \label{fig:NN comparison}
\end{figure}

\subsection{Numerical Evaluation}

We can now further illustrate the above derived theoretical results empirically.

\textbf{Leaning dynamics (Theorem~\ref{th: linear dynamics}) and new Datapoint.} We can now illustrate that the derived dynamics in \eqref{eq:differential eq} do indeed behave similar to learning \eqref{eq: general form} using gradient decent updates. 
To analyze the learning dynamics we consider the gradient decent update of \eqref{eq: general form}:
\begin{align}\label{eq: SGD}
   W_{1,2}^{(t+1)} =  W_{1,2}^{(t)} + \eta\nabla\calL_{W_2^{(t)}, W_1^{(t)}}
\end{align}
where $W_1^{(t)},W_2^{(t)}$ are the weights at time step $t$  and $\eta$ is the learning rate as a reference. Practically the constraints in \eqref{eq: general form} are enforced by projecting the weights back onto $W_2^\top W_2 =  \bbI_z$ and $W_1^\top W_1 = \bbI_d$ after each gradient step. 
Secondly we consider a discretized version of \eqref{eq:differential eq}
\begin{align}\label{eq:differential disc}
q_{t+1} = q_{t}-2\eta\big[2 \Lambda q_{t} -  \Lambda q_{t} q_{t}^\top q_{t}  -  q_{t} q_{t}^\top \Lambda q_{t}\big] .
\end{align}
where  $q_t$ is the machine outputs at time step $t$. 
We now illustrate the comparison through in Figure~\ref{fig:NN comparison} where we consider different width of the network $(h\in\{10,100,1000\})$ and $\eta = 0.01$. We can firstly observe on the left, that the loss function of the trained network and the dynamics and observe while the decay is slightly slower in the dynamics setting both converge to the same final loss value. 
Secondly we can compare the function outputs during training in Figure~\ref{fig:NN comparison} (right): We initialize the NN randomly and use this initial machine output as $q_0$. We observe that during the evolution using \eqref{eq: SGD} \& \eqref{eq:differential disc} for a given initialization the are stay close to each other and converge to the same final outputs.

\begin{figure}[t]
    \centering
    \includegraphics[width = 0.47\textwidth]{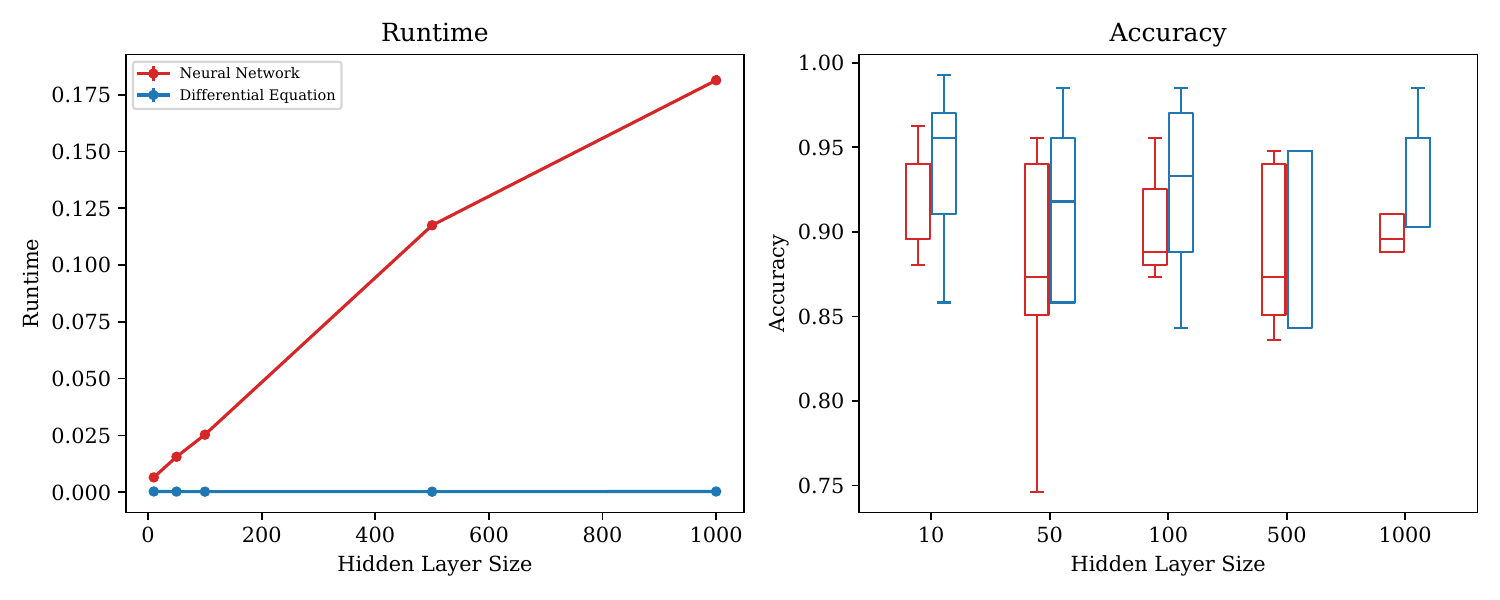}
    \caption{\textbf{(left)} Run-time comparison between running differential equation and SGD iteration for different hidden layer width.
    \textbf{(right)} Downstream task: Accuracy comparison for SVM on embedding obtained by SGD optimization and running the differential equation.}
    \label{fig:runtime}
\end{figure}

\textbf{Runtime and downstram task.} Before going into the illustration of the dynamics we furthermore note that an update step using \eqref{eq:differential disc} is significantly faster then a SGD step using \eqref{eq: SGD}. For this illustration we now consider two classes with 200 datapoints each from the MNIST dataset \cite{deng2012mnist}.  This is illustrated in Figure~\ref{fig:runtime} (left) where we compare the runtime over different layer width (of which \eqref{eq:differential disc} is independent of). Expectantly \eqref{eq: SGD} scales linearly with $h$ and overall \eqref{eq:differential disc} has a shorter runtime per timestep.
While throughout the paper we focus on the obtained embeddings we can furthermore consider the performance of downstream tasks on top of the embeddings. We illustrate this in the setting above where we apply a linear SVM on top of the embeddings. The results are shown in Figure~\ref{fig:runtime} (right) where we observe that overall the performance of the downstream task for both the SGD optimization and the differential equation coincide.

\textbf{Numerical Evaluation of Theorem~\ref{th: convergence}.} We can again illustrate that the behaviour stated in Theorem~\ref{th: convergence} can indeed be observed empirically. This is shown in Figure~\ref{fig:NN comparison} (right), a setting where $C$ has negative eigenvalues. We  observe that eventually the machine outputs converge to the smallest eigenvector.

\section{Conclusion}\label{sec: conclusion}

The study of learning dynamics of (infinite-width) neural networks  has led to important results for the supervised setting. However, there is little understanding of SSL dynamics. Our initial steps towards analysing SSL dynamics encounters a hurdle: standard SSL training has drastic dimension collapse (Proposition \ref{prop:dimension-collapse}), unless there are suitable constraints. 
We consider a general formulation of linear SSL under orthogonality constraints \eqref{eq: general form}, and derive its learning dynamics (Theorem \ref{th: linear dynamics}).
We also show that the derived dynamics can approximate the SSL dynamics using wide neural networks (Theorem \ref{th:non-linear}) under some conditions on activation $\phi$. We not only provide a framework for analysis of SSL dynamics, but also shows how the analysis can critically differ from the supervised setting. As we numerically demonstrate, our derived dynamics can be used an efficient computational tool to approximate SSL models. In particular, the equivalence in Proposition \ref{prop:L2-ortho-equivalence} ensures that the orthogonality constraints can be equivalently imposed using a scaled loss, which is easy to implement in practice.
We conclude with a limitation and open problem. Our analysis relies on a linear approximation of wide networks, but more precise characterisation in terms of kernel approximation \citep{Jacot2018Neurips,0001ZB20} may be possible, which can better explain the dynamics of deep SSL models. However, integrating orthogonality or operator norm constraints in the NTK regime remains an open question.

\section{Acknowledgments}

This work has been supported by the German Research Foundation through the SPP-2298 (project GH-257/2-1), and also jointly with French National Research Agency through the DFG-ANR PRCI ASCAI.

\bibliography{arxiv_bib}

\begin{thebibliography}{63}
\providecommand{\natexlab}[1]{#1}

\bibitem[{Alemohammad et~al.(2021)Alemohammad, Wang, Balestriero, and
  Baraniuk}]{Alemohammad0BB21}
Alemohammad, S.; Wang, Z.; Balestriero, R.; and Baraniuk, R.~G. 2021.
\newblock The Recurrent Neural Tangent Kernel.
\newblock In \emph{International Conference on Learning Representations}.

\bibitem[{Arora et~al.(2019{\natexlab{a}})Arora, Du, Hu, Li, Salakhutdinov, and
  Wang}]{Arora20219NeurIPS}
Arora, S.; Du, S.~S.; Hu, W.; Li, Z.; Salakhutdinov, R.; and Wang, R.
  2019{\natexlab{a}}.
\newblock On Exact Computation with an Infinitely Wide Neural Net.
\newblock In \emph{International Conference on Neural Information Processing
  Systems}.

\bibitem[{Arora et~al.(2019{\natexlab{b}})Arora, Khandeparkar, Khodak,
  Plevrakis, and Saunshi}]{Arora2019ATA}
Arora, S.; Khandeparkar, H.; Khodak, M.; Plevrakis, O.; and Saunshi, N.
  2019{\natexlab{b}}.
\newblock A Theoretical Analysis of Contrastive Unsupervised Representation
  Learning.
\newblock In \emph{International Conference on Machine Learning}.

\bibitem[{Balestriero and LeCun(2022)}]{BalestrieroL22}
Balestriero, R.; and LeCun, Y. 2022.
\newblock Contrastive and Non-Contrastive Self-Supervised Learning Recover
  Global and Local Spectral Embedding Methods.
\newblock In \emph{Advances in Neural Information Processing Systems}.

\bibitem[{Bansal, Kaplun, and Barak(2021)}]{BansalKB21}
Bansal, Y.; Kaplun, G.; and Barak, B. 2021.
\newblock For self-supervised learning, Rationality implies generalization,
  provably.
\newblock In \emph{9th International Conference on Learning Representations}.

\bibitem[{Bao, Nagano, and Nozawa(2022)}]{0002NN22}
Bao, H.; Nagano, Y.; and Nozawa, K. 2022.
\newblock On the Surrogate Gap between Contrastive and Supervised Losses.
\newblock In \emph{International Conference on Machine Learning}.

\bibitem[{Bardes, Ponce, and LeCun(2021)}]{bardes2021vicreg}
Bardes, A.; Ponce, J.; and LeCun, Y. 2021.
\newblock Vicreg: Variance-invariance-covariance regularization for
  self-supervised learning.
\newblock \emph{arXiv preprint arXiv:2105.04906}.

\bibitem[{Bromley et~al.(1993)Bromley, Guyon, LeCun, S{\"a}ckinger, and
  Shah}]{bromley1993signature}
Bromley, J.; Guyon, I.; LeCun, Y.; S{\"a}ckinger, E.; and Shah, R. 1993.
\newblock Signature verification using a" siamese" time delay neural network.
\newblock \emph{Advances in neural information processing systems}.

\bibitem[{Cabannes et~al.(2023)Cabannes, Kiani, Balestriero, LeCun, and
  Bietti}]{abs-2302-02774}
Cabannes, V.; Kiani, B.~T.; Balestriero, R.; LeCun, Y.; and Bietti, A. 2023.
\newblock The {SSL} Interplay: Augmentations, Inductive Bias, and
  Generalization.
\newblock \emph{CoRR}, abs/2302.02774.

\bibitem[{Caron et~al.(2020)Caron, Misra, Mairal, Goyal, Bojanowski, and
  Joulin}]{caron2020unsupervised}
Caron, M.; Misra, I.; Mairal, J.; Goyal, P.; Bojanowski, P.; and Joulin, A.
  2020.
\newblock Unsupervised learning of visual features by contrasting cluster
  assignments.
\newblock \emph{Advances in neural information processing systems}.

\bibitem[{Caron et~al.(2021)Caron, Touvron, Misra, J{\'e}gou, Mairal,
  Bojanowski, and Joulin}]{caron2021emerging}
Caron, M.; Touvron, H.; Misra, I.; J{\'e}gou, H.; Mairal, J.; Bojanowski, P.;
  and Joulin, A. 2021.
\newblock Emerging properties in self-supervised vision transformers.
\newblock In \emph{Proceedings of the IEEE/CVF International Conference on
  Computer Vision}.

\bibitem[{Chen et~al.(2022)Chen, Zhang, Xu, Chen, Duan, Chen, Tran, Zeng, and
  Chilimbi}]{ChenZXCD0TZC22}
Chen, C.; Zhang, J.; Xu, Y.; Chen, L.; Duan, J.; Chen, Y.; Tran, S.; Zeng, B.;
  and Chilimbi, T. 2022.
\newblock Why do We Need Large Batchsizes in Contrastive Learning? {A}
  Gradient-Bias Perspective.
\newblock In \emph{Advances in Neural Information Processing Systems}.

\bibitem[{Chen et~al.(2020)Chen, Kornblith, Norouzi, and
  Hinton}]{chen2020simple}
Chen, T.; Kornblith, S.; Norouzi, M.; and Hinton, G. 2020.
\newblock A simple framework for contrastive learning of visual
  representations.
\newblock In \emph{International conference on machine learning}. PMLR.

\bibitem[{Chen and He(2021)}]{chen2020simsiam}
Chen, X.; and He, K. 2021.
\newblock Exploring Simple Siamese Representation Learning.
\newblock In \emph{{IEEE} Conference on Computer Vision and Pattern
  Recognition}.

\bibitem[{Chen et~al.(2021)Chen, Huang, Nguyen, and Weng}]{ChenHNW21}
Chen, Y.; Huang, W.; Nguyen, L.~M.; and Weng, T. 2021.
\newblock On the Equivalence between Neural Network and Support Vector Machine.
\newblock In \emph{Advances in Neural Information Processing Systems 34}.

\bibitem[{Deng(2012)}]{deng2012mnist}
Deng, L. 2012.
\newblock The mnist database of handwritten digit images for machine learning
  research.
\newblock \emph{IEEE Signal Processing Magazine}.

\bibitem[{Devlin et~al.(2019)Devlin, Chang, Lee, and Toutanova}]{BERT2019}
Devlin, J.; Chang, M.; Lee, K.; and Toutanova, K. 2019.
\newblock {BERT:} Pre-training of Deep Bidirectional Transformers for Language
  Understanding.
\newblock In \emph{Proceedings of the 2019 Conference of the North American
  Chapter of the Association for Computational Linguistics: Human Language
  Technologies,}.

\bibitem[{Du et~al.(2019)Du, Hou, Salakhutdinov, Poczos, Wang, and
  Xu}]{Du_2019_NEURIPS2019}
Du, S.~S.; Hou, K.; Salakhutdinov, R.~R.; Poczos, B.; Wang, R.; and Xu, K.
  2019.
\newblock Graph Neural Tangent Kernel: Fusing Graph Neural Networks with Graph
  Kernels.
\newblock In \emph{Advances in Neural Information Processing Systems},
  volume~32.

\bibitem[{Edelman, Arias, and Smith(1998)}]{edelman1998geometry}
Edelman, A.; Arias, T.~A.; and Smith, S.~T. 1998.
\newblock The geometry of algorithms with orthogonality constraints.
\newblock \emph{SIAM journal on Matrix Analysis and Applications}.

\bibitem[{Ermolov et~al.(2021)Ermolov, Siarohin, Sangineto, and
  Sebe}]{ermolov2021whitening}
Ermolov, A.; Siarohin, A.; Sangineto, E.; and Sebe, N. 2021.
\newblock Whitening for self-supervised representation learning.
\newblock In \emph{International Conference on Machine Learning}.

\bibitem[{Fernando et~al.(2017)Fernando, Bilen, Gavves, and
  Gould}]{fernando2017self}
Fernando, B.; Bilen, H.; Gavves, E.; and Gould, S. 2017.
\newblock Self-supervised video representation learning with odd-one-out
  networks.
\newblock In \emph{IEEE conference on computer vision and pattern recognition}.

\bibitem[{Fukumizu(1998)}]{fukumizu1998dynamics}
Fukumizu, K. 1998.
\newblock Dynamics of batch learning in multilayer neural networks.
\newblock In \emph{ICANN 98: Proceedings of the 8th International Conference on
  Artificial Neural Networks}. Springer.

\bibitem[{Ge et~al.(2023)Ge, Tang, Fan, and Jin}]{GeTFJ-2303-01566}
Ge, J.; Tang, S.; Fan, J.; and Jin, C. 2023.
\newblock On the Provable Advantage of Unsupervised Pretraining.
\newblock \emph{arXiv preprint}, abs/2303.01566.

\bibitem[{Gidaris, Singh, and Komodakis(2018)}]{gidaris2018unsupervised}
Gidaris, S.; Singh, P.; and Komodakis, N. 2018.
\newblock Unsupervised representation learning by predicting image rotations.
\newblock \emph{arXiv preprint arXiv:1803.07728}.

\bibitem[{Grill et~al.(2020)Grill, Strub, Altch{\'{e}}, Tallec, Richemond,
  Buchatskaya, Doersch, Pires, Guo, Azar, Piot, Kavukcuoglu, Munos, and
  Valko}]{Grill2020Neurips}
Grill, J.; Strub, F.; Altch{\'{e}}, F.; Tallec, C.; Richemond, P.~H.;
  Buchatskaya, E.; Doersch, C.; Pires, B.~{\'{A}}.; Guo, Z.; Azar, M.~G.; Piot,
  B.; Kavukcuoglu, K.; Munos, R.; and Valko, M. 2020.
\newblock Bootstrap Your Own Latent - {A} New Approach to Self-Supervised
  Learning.
\newblock In \emph{Advances in Neural Information Processing Systems}.

\bibitem[{Han, Ye, and Zhan(2023)}]{HanYZ23}
Han, L.; Ye, H.; and Zhan, D. 2023.
\newblock Augmentation Component Analysis: Modeling Similarity via the
  Augmentation Overlaps.
\newblock In \emph{The Eleventh International Conference on Learning
  Representations}.

\bibitem[{HaoChen et~al.(2021)HaoChen, Wei, Gaidon, and
  Ma}]{HaoChen2021ProvableGF}
HaoChen, J.~Z.; Wei, C.; Gaidon, A.; and Ma, T. 2021.
\newblock Provable Guarantees for Self-Supervised Deep Learning with Spectral
  Contrastive Loss.
\newblock In \emph{Advances in neural information processing systems}.

\bibitem[{Heckel and Yilmaz(2021)}]{HeckelY21}
Heckel, R.; and Yilmaz, F.~F. 2021.
\newblock Early Stopping in Deep Networks: Double Descent and How to Eliminate
  it.
\newblock In \emph{International Conference on Learning Representations}.

\bibitem[{Jacot, Gabriel, and Hongler(2018)}]{Jacot2018Neurips}
Jacot, A.; Gabriel, F.; and Hongler, C. 2018.
\newblock Neural Tangent Kernel: Convergence and Generalization in Neural
  Networks.
\newblock In \emph{International Conference on Neural Information Processing
  Systems}.

\bibitem[{Jing and Tian(2019)}]{Jing2019SelfSupervisedVF}
Jing, L.; and Tian, Y. 2019.
\newblock Self-Supervised Visual Feature Learning With Deep Neural Networks: A
  Survey.
\newblock \emph{IEEE Transactions on Pattern Analysis and Machine
  Intelligence}.

\bibitem[{Jing et~al.(2022)Jing, Vincent, LeCun, and Tian}]{JingVLT22}
Jing, L.; Vincent, P.; LeCun, Y.; and Tian, Y. 2022.
\newblock Understanding Dimensional Collapse in Contrastive Self-supervised
  Learning.
\newblock In \emph{The Tenth International Conference on Learning
  Representations}.

\bibitem[{Johnson, Hanchi, and Maddison(2023)}]{0001HM23}
Johnson, D.~D.; Hanchi, A.~E.; and Maddison, C.~J. 2023.
\newblock Contrastive Learning Can Find An Optimal Basis For Approximately
  View-Invariant Functions.
\newblock In \emph{International Conference on Learning Representations}.

\bibitem[{Kanazawa, Jacobs, and Chandraker(2016)}]{kanazawa2016warpnet}
Kanazawa, A.; Jacobs, D.~W.; and Chandraker, M. 2016.
\newblock Warpnet: Weakly supervised matching for single-view reconstruction.
\newblock In \emph{IEEE Conference on Computer Vision and Pattern Recognition}.

\bibitem[{Kiani et~al.(2022)Kiani, Balestriero, Chen, Lloyd, and
  LeCun}]{abs-2209-14884}
Kiani, B.~T.; Balestriero, R.; Chen, Y.; Lloyd, S.; and LeCun, Y. 2022.
\newblock Joint Embedding Self-Supervised Learning in the Kernel Regime.
\newblock \emph{CoRR}, abs/2209.14884.

\bibitem[{Kunin et~al.(2019)Kunin, Bloom, Goeva, and Seed}]{kunin19apmlr}
Kunin, D.; Bloom, J.; Goeva, A.; and Seed, C. 2019.
\newblock Loss Landscapes of Regularized Linear Autoencoders.
\newblock In \emph{Proceedings of the 36th International Conference on Machine
  Learning}.

\bibitem[{Lai, Lim, and Ye(2020)}]{Zehua2020}
Lai, Z.; Lim, L.-H.; and Ye, K. 2020.
\newblock Simpler Grassmannian optimization.

\bibitem[{Lee et~al.(2021)Lee, Lei, Saunshi, and Zhuo}]{LeeLSZ21}
Lee, J.~D.; Lei, Q.; Saunshi, N.; and Zhuo, J. 2021.
\newblock Predicting What You Already Know Helps: Provable Self-Supervised
  Learning.
\newblock In \emph{Advances in Neural Information Processing Systems}.

\bibitem[{Li et~al.(2021)Li, Nguyen, Hegde, and Wong}]{li2021implicit}
Li, J.; Nguyen, T.~V.; Hegde, C.; and Wong, R. K.~W. 2021.
\newblock Implicit Sparse Regularization: The Impact of Depth and Early
  Stopping.
\newblock In \emph{Advances in Neural Information Processing Systems}.

\bibitem[{Liu, Zhu, and Belkin(2020)}]{0001ZB20}
Liu, C.; Zhu, L.; and Belkin, M. 2020.
\newblock On the linearity of large non-linear models: when and why the tangent
  kernel is constant.
\newblock In \emph{Advances in Neural Information Processing Systems 33}.

\bibitem[{Liu et~al.(2023)Liu, Lubana, Ueda, and Tanaka}]{LiuLUT23}
Liu, Z.; Lubana, E.~S.; Ueda, M.; and Tanaka, H. 2023.
\newblock What shapes the loss landscape of self supervised learning?
\newblock In \emph{International Conference on Learning Representations}.

\bibitem[{Luo et~al.(2019)Luo, Wang, Shao, and Peng}]{luo2018towards}
Luo, P.; Wang, X.; Shao, W.; and Peng, Z. 2019.
\newblock Towards Understanding Regularization in Batch Normalization.
\newblock In \emph{International Conference on Learning Representations}.

\bibitem[{Misra and Maaten(2020)}]{misra2020self}
Misra, I.; and Maaten, L. v.~d. 2020.
\newblock Self-supervised learning of pretext-invariant representations.
\newblock In \emph{Proceedings of the IEEE/CVF Conference on Computer Vision
  and Pattern Recognition}.

\bibitem[{Mohamed et~al.(2022)Mohamed, yi~Lee, Borgholt, Havtorn, Edin, Igel,
  Kirchhoff, Li, Livescu, Maaløe, Sainath, and Watanabe}]{Mohamed2022}
Mohamed, A.-R.; yi~Lee, H.; Borgholt, L.; Havtorn, J.~D.; Edin, J.; Igel, C.;
  Kirchhoff, K.; Li, S.-W.; Livescu, K.; Maaløe, L.; Sainath, T.~N.; and
  Watanabe, S. 2022.
\newblock Self-supervised speech representation learning: A review.
\newblock \emph{IEEE JSTSP Special Issue on Self-Supervised Learning for Speech
  and Audio Processing}.

\bibitem[{Neyshabur et~al.(2017)Neyshabur, Bhojanapalli, McAllester, and
  Srebro}]{NeyshaburBMS17}
Neyshabur, B.; Bhojanapalli, S.; McAllester, D.; and Srebro, N. 2017.
\newblock Exploring Generalization in Deep Learning.
\newblock In \emph{Advances in Neural Information Processing Systems}.

\bibitem[{Nguyen, Wong, and Hegde(2021)}]{Nguyen_2021_IEEE}
Nguyen, T.~V.; Wong, R. K.~W.; and Hegde, C. 2021.
\newblock Benefits of Jointly Training Autoencoders: An Improved Neural Tangent
  Kernel Analysis.
\newblock \emph{IEEE Transactions on Information Theory}.

\bibitem[{Novotny et~al.(2018)Novotny, Albanie, Larlus, and
  Vedaldi}]{novotny2018self}
Novotny, D.; Albanie, S.; Larlus, D.; and Vedaldi, A. 2018.
\newblock Self-supervised learning of geometrically stable features through
  probabilistic introspection.
\newblock In \emph{IEEE Conference on Computer Vision and Pattern Recognition}.

\bibitem[{Pedregosa et~al.(2011)Pedregosa, Varoquaux, Gramfort, Michel,
  Thirion, Grisel, Blondel, Prettenhofer, Weiss, Dubourg, Vanderplas, Passos,
  Cournapeau, Brucher, Perrot, and Duchesnay}]{scikit-learn}
Pedregosa, F.; Varoquaux, G.; Gramfort, A.; Michel, V.; Thirion, B.; Grisel,
  O.; Blondel, M.; Prettenhofer, P.; Weiss, R.; Dubourg, V.; Vanderplas, J.;
  Passos, A.; Cournapeau, D.; Brucher, M.; Perrot, M.; and Duchesnay, E. 2011.
\newblock Scikit-learn: Machine Learning in {P}ython.
\newblock \emph{Journal of Machine Learning Research}.

\bibitem[{Pretorius, Kroon, and Kamper(2018)}]{Pretorius2018LearningDO}
Pretorius, A.; Kroon, S.; and Kamper, H. 2018.
\newblock Learning Dynamics of Linear Denoising Autoencoders.
\newblock In \emph{International Conference on Machine Learning}.

\bibitem[{Sabanayagam, Esser, and Ghoshdastidar(2022)}]{Sabanayagam2022}
Sabanayagam, M.; Esser, P.; and Ghoshdastidar, D. 2022.
\newblock Representation Power of Graph Convolutions : Neural Tangent Kernel
  Analysis.

\bibitem[{Saunshi, Malladi, and Arora(2021)}]{SaunshiMA21}
Saunshi, N.; Malladi, S.; and Arora, S. 2021.
\newblock A Mathematical Exploration of Why Language Models Help Solve
  Downstream Tasks.
\newblock In \emph{International Conference on Learning Representations}.

\bibitem[{Saxe, McClelland, and Ganguli(2014)}]{SaxeMG13}
Saxe, A.~M.; McClelland, J.~L.; and Ganguli, S. 2014.
\newblock Exact solutions to the nonlinear dynamics of learning in deep linear
  neural networks.
\newblock In \emph{International Conference on Learning Representations}.

\bibitem[{Sermanet et~al.(2018)Sermanet, Lynch, Chebotar, Hsu, Jang, Schaal,
  Levine, and Brain}]{sermanet2018time}
Sermanet, P.; Lynch, C.; Chebotar, Y.; Hsu, J.; Jang, E.; Schaal, S.; Levine,
  S.; and Brain. 2018.
\newblock Time-contrastive networks: Self-supervised learning from video.
\newblock In \emph{IEEE international conference on robotics and automation}.

\bibitem[{Shah et~al.(2022)Shah, Sra, Chellappa, and Cherian}]{ShahSCC22}
Shah, A.; Sra, S.; Chellappa, R.; and Cherian, A. 2022.
\newblock Max-Margin Contrastive Learning.
\newblock In \emph{{AAAI} Conference on Artificial Intelligence}.

\bibitem[{Soudry et~al.(2018)Soudry, Hoffer, Nacson, Gunasekar, and
  Srebro}]{SoudryHNGS18}
Soudry, D.; Hoffer, E.; Nacson, M.~S.; Gunasekar, S.; and Srebro, N. 2018.
\newblock The Implicit Bias of Gradient Descent on Separable Data.
\newblock \emph{Journal of Machine Learning Research}.

\bibitem[{Steffen et~al.(2019)Steffen, Baevski, Collobert, and
  Auli}]{Steffen2019arxiv}
Steffen, S.; Baevski, A.; Collobert, R.; and Auli, M. 2019.
\newblock wav2vec: Unsupervised Pre-training for Speech Recognition.

\bibitem[{Tian(2022)}]{Tian22}
Tian, Y. 2022.
\newblock Understanding Deep Contrastive Learning via Coordinate-wise
  Optimization.
\newblock In \emph{Advances in Neural Information Processing Systems}.

\bibitem[{Tosh, Krishnamurthy, and Hsu(2021)}]{ToshK021}
Tosh, C.; Krishnamurthy, A.; and Hsu, D. 2021.
\newblock Contrastive learning, multi-view redundancy, and linear models.
\newblock In \emph{Algorithmic Learning Theory}, Proceedings of Machine
  Learning Research.

\bibitem[{van~den Oord, Li, and Vinyals(2018)}]{abs-1807-03748}
van~den Oord, A.; Li, Y.; and Vinyals, O. 2018.
\newblock Representation Learning with Contrastive Predictive Coding.
\newblock \emph{CoRR}, abs/1807.03748.

\bibitem[{Wei, Xie, and Ma(2021)}]{WeiXM21}
Wei, C.; Xie, S.~M.; and Ma, T. 2021.
\newblock Why Do Pretrained Language Models Help in Downstream Tasks? An
  Analysis of Head and Prompt Tuning.
\newblock In \emph{Advances in Neural Information Processing Systems}.

\bibitem[{Yan, Helmke, and Moore(1994)}]{Ojariccati}
Yan, W.-Y.; Helmke, U.; and Moore, J. 1994.
\newblock Global analysis of Oja's flow for neural networks.
\newblock \emph{IEEE Transactions on Neural Networks}.

\bibitem[{Zhai et~al.(2023)Zhai, Liu, Risteski, Kolter, and
  Ravikumar}]{abs-2306-00788}
Zhai, R.; Liu, B.; Risteski, A.; Kolter, Z.; and Ravikumar, P. 2023.
\newblock Understanding Augmentation-based Self-Supervised Representation
  Learning via {RKHS} Approximation.
\newblock \emph{CoRR}, abs/2306.00788.

\bibitem[{Zhang et~al.(2017)Zhang, Bengio, Hardt, Recht, and
  Vinyals}]{ZhangBHRV17}
Zhang, C.; Bengio, S.; Hardt, M.; Recht, B.; and Vinyals, O. 2017.
\newblock Understanding deep learning requires rethinking generalization.
\newblock In \emph{International Conference on Learning Representations}.

\bibitem[{Zhuo et~al.(2023)Zhuo, Wang, Ma, and Wang}]{Zhuo0M023}
Zhuo, Z.; Wang, Y.; Ma, J.; and Wang, Y. 2023.
\newblock Towards a Unified Theoretical Understanding of Non-contrastive
  Learning via Rank Differential Mechanism.
\newblock In \emph{The Eleventh International Conference on Learning
  Representations}.

\end{thebibliography}

\onecolumn
\section*{Appendix}

~

In the supplementary material we provide the following additional proofs and results
\begin{itemize}
    \item Proof of Lemma\ref{lem:dimension-interaction}
\item Proof of Proposition~\ref{prop:dimension-collapse}
\item Proof of Proposition~\ref{prop:L2-ortho-equivalence}
\item Proof of Theorem~\ref{th:non-linear}
\item Proof of Theorem~\ref{th: linear dynamics}
\item Proof of Theorem~\ref{th: convergence}
\end{itemize}

~

\subsection{Proof of Lemma\ref{lem:dimension-interaction}}
\begin{proof}

Let the collumns of $W_2$ be denoted by $w_1, w_2, ..., w_z$. Then we note that each component of $u$, $u_j$ is given by $u_j(x) = w_j^\top \phi(W_1x)$. Thus if $l \neq j$, $u_j(x)$ has no dependence with $w_l$ i.e. $\nabla_{w_l} u_j(x) = 0$. Thus we get that when $l \neq j$, 
\begin{align*}
    \left\langle\nabla_\Theta u_l(x), \nabla_\Theta u_j(x')\right\rangle = \left\langle\nabla_{W_1} u_l(x), \nabla_{W_1} u_j(x')\right\rangle .
\end{align*}
We can now use \cite{0001ZB20} (for instance its Lemma 1) which basically concludes that no training happens at the penultimate or prior layers. In limit all positive gradients arise only from the final layer. As such \[\left\langle\nabla_{W_1} u_l(x), \nabla_{W_1} u_j(x')\right\rangle = 0.\]

By the same token, for $l=j$, 
\begin{align*}
    \left\langle\nabla_\Theta u_l(x), \nabla_\Theta u_j(x')\right\rangle =& \left\langle\nabla_{W_1} u_l(x), \nabla_{W_1} u_j(x')\right\rangle + \left\langle\nabla_{w_j} u_j(x), \nabla_{w_j} u_j(x')\right\rangle \\
    =& \left\langle \phi(W_1x), \phi(W_1x') \right\rangle.
\end{align*}
Finally again using the fact that $W_1$ does not change in training and that $W_1$ is initialized from a normalized gaussian , when $\phi$ is the identity map, it is well known that the above converges to $x^\top x'$ (as there $\left\langle \phi(W_1x), \phi(W_1x')\right\rangle = x^\top (W_1^\top W_1) x \rightarrow x^\top x'$) and otherwise to a deterministic kernel $k$ (see e.g. \citep{0001ZB20}, \citep{Arora2019ATA}).
\end{proof}

\subsection{Proof of Proposition~\ref{prop:dimension-collapse}} 
\begin{proof}

For simplicity of the proof we begin by reformulating the loss function in both contrastive and noncontrastive setting to a more general form. In particular it is trivial to check that we can generalize by writing \[\loss = \Tr\left(W_2^\top f(X, W_1) W_2 \right),\] where $X$ denotes the collection of all the relevant data (i.e. $\forall~1\leq i\leq n$ $x_i$, as well as $x_i^{+}$ and $x^{-}$ where applicable), and $f(X, W_1) = \sum_{i=1}^n \phi(W_1x_i)\left(\phi(W_1x_i^{-}) - \phi(W_1x_i^+) \right)^\top$ in the contrastive setting (equation~\ref{eq:contrastive loss function}) while  $f(X, W_1) = -\sum_{i=1}^n \phi(W_1x_i)\phi(W_1x_i^{+})^\top$ in the non-contrastive setting (equation~\ref{eq:non-contrastive loss function}.)

Then decompose \[W_2 W_2^\top = \sum_{i=1}^k \sigma_i^2 v_iv_i^\top.\] Note then that $\norm{W_2}_F^2 = \Tr\left(W_2 W_2^\top\right) = \sum_{i=1}^k \sigma_i^2.$ Thus the optimization target, 
\begin{align*}
    \loss(W_1, W_2) &= \Tr\left(W_2^\top f(X,W_1) W_2\right) = \Tr\left(f(X,W_1) W_2W_2^\top\right) = \Tr\left(f(X,W_1)\sum_{i=1}^k \sigma_i^2 v_iv_i^\top \right) \\
    &= \sum_{i=1}^k \sigma_i^2 v_i^\top f(X,W_1) v_i  \geq \min_{i = 1 \textit{ to } k}\{v_i^\top f(X,W_1) v_i\}  \sum_{i=1}^k \sigma_i^2 = \norm{W_2}_F^2 \min_{i = 1 \textit{ to } k}\{v_i^\top f(X,W_1) v_i\}.
\end{align*}
Thus when the Frobenius norm is restricted (i.e. bounded between $0$ and $c$), if $f(X,W_1)$ has atleast one negative eigenvalue the loss is minimized when $v_1$ is the eigenvector corresponding to the most negative eigenvalue of  $f(X,W_1)$ with $\sigma_1 = \norm{W_2}_F$, with no other non-zero singular value.
On the other hand if $f(X,W_1)$ has no negative eigenvalue then the loss is minimized when $W_2=0.$
\end{proof}

\subsection{Proof of Proposition~\ref{prop:L2-ortho-equivalence}}\label{sec: proof normalization}
\begin{proof}
    
We begin by quickly observing that $(1) \iff (2).$ This is simply done by defining $\hat{W}_i = \frac{W_i}{\norm{W_i}_2}$ for $i=1,2$. Then we have 
\[
\argmin_{W_1,W_2} \frac{\Tr\left(W_2^\top W_1{C}W_1^\top W_2 \right)}{\norm{W_1}_2^2\norm{W_2}_2^2} = \argmin_{\hat{W}_1,\hat{W}_2 : \norm{\hat{W}_1}_2 = \norm{\hat{W}_1}_2 = 1}  \Tr\left(\hat{W}_2^\top \hat{W}_1{C}\hat{W}_1^\top \hat{W}_2 \right)
\]
Using the fact that at least one eigenvalue of $C$ is strictly negative (this rules out the case that the optimal is achieved when $W_i = 0$ as that would have prevented division by norm)  then we can quickly get that
\[
\argmin_{\hat{W}_1,\hat{W}_2 : \norm{\hat{W}_1}_2 = \norm{\hat{W}_1}_2 = 1}  \Tr\left(\hat{W}_2^\top \hat{W}_1{C}\hat{W}_1^\top \hat{W}_2 \right)=
\argmin_{\hat{W}_1,\hat{W}_2 : \norm{\hat{W}_1}_2 \leq 1; \norm{\hat{W}_1}_2 \leq 1}  \Tr\left(\hat{W}_2^\top \hat{W}_1{C}\hat{W}_1^\top \hat{W}_2 \right).
\]

For $(2) \iff (3)$, we begin by observing that by submultiplicativity of norm, any $W_1, W_2$ such that $\norm{W_1}_2 \leq 1$ and $\norm{W_2}_2 \leq 1$ automatically falls is the optimization space given by $\norm{W_1^\top W_2} \leq 1$ thus giving one direction of the optimization equivalence for free. For the other side we note that given any $W_1, W_2$ such that $\norm{W_1^\top W_2}_2^2 = \norm{W_1^\top W_2W_2^\top W_1}_2 \leq 1$, we can construct $\hat{W}_1, \hat{W}_2$  such that $\norm{\hat{W}_i}\leq 1$ and $W_1^\top W_2W_2^\top W_1 = \hat{W}_1^\top \hat{W}_2\hat{W}_2^\top \hat{W}_1$. This follows from considering the singular values decomposition of $W_1^\top W_2$, getting $W_1^\top W_2 = U^\top \Sigma V$. As the norm of the product is smaller than $1$, all the entries of the singular value matrix $\Sigma$ are less than $1$. Thus depending upon which among $d$ or $z$ is larger we consider either the matrices $\Sigma U$  and $V$ or the matrices $U$ and $\Sigma V$ to be our candidate $\hat{W}_1$ and $\hat{W}_2$ respectively. To complete we will simply have to add zero rows to our choice i.e. say $U$ and $\Sigma V$ to match the dimensions (i.e. to get a $n \times d$ matrix from a $z \times d$ one).

Finally for $(3) \iff (4)$ we begin by defining $W = W_1^\top W_2$. Then the optimization problem in (3) becomes, 
\[\min_{W : \norm{W}_2 \leq 1} \Tr\left(W^\top C W \right) = \min_{W : \norm{W}_2 \leq 1} \Tr\left( C WW^\top \right).
\]
We then prove that we are done if we can prove the claim at optimal of (3) (i.e. the above optimization problem) all the eigenvalues of $WW^\top$ are $1$ or $0$. Given this claim  the singular value decomposition of $W$ becomes only $W = U^\top V$, where if $k = \textit{rank}(W)$, $U$ is a $k \times d$ matrix and $V$ a $k \times z$ matrix. Additionally by property of SVD, the collumns of $U$ and $V$ are orthonormal. Finally as \[k = \textit{rank}(W) \leq \min\{\textit{rank}(W_1), \textit{rank}(W_2)\} \leq \min\{d,z\} \leq n,\]
we can add a bunch of zero rows to $U$ and $V$ to get our $n \times d$ and $n \times z$ matrices which will be our corresponding $W_1$ and $W_2$.

It remains to prove that $\Tr\left( C WW^\top \right)$ is minimized when all the eigenvalues of $WW^\top$ are $1$ or $0$. To do this simply decompose \[WW^\top = \sum_{i=1}^{k} \sigma_i^2 v_iv_i^\top,\] where $v_i$ is the set of orthonormal eigenvectors of $WW^\top$ corresponding to non-zero eigenvalues of $WW^\top$ (or alternatively non-zero singular values of $W$)
Then \begin{align*}
    \Tr\left( C WW^\top \right) =& \Tr\left( C \sum_{i=1}^{k} \sigma_i^2 v_iv_i^\top \right) 
    \\=& \sum_{i=1}^{k} \sigma_i^2 \Tr\left( C   v_iv_i^\top \right)
    \\=& \sum_{i=1}^{k} \sigma_i^2 v_i^\top C v_i.
\end{align*}

Thus if $C$ has $l$ many strictly negative eigenvalues $\lambda_1 \leq \dots \leq  \lambda_l$ with corresponding eigenvectors $c_1, \dots , c_l$ and $\sigma_i^2$ is positive the above quantity is minimized by choosing as many of these as possible i.e. $v_1 =  c_1, \dots, v_{\min\{d,z,l\}} = c_{\min\{d,z,l\}}$ and setting the corresponding $\sigma_i$ to be $1$ while every setting all other eigen-values to $0$.

We then also note by consequence of the above proof that we avoid dimension collapse when possible i.e. when $C$ has multiple strictly negative eigenvalues (which is what one should expect if the data is not one dimensional as $\bbE[C] = - \bbE[xx^\top]$)
\end{proof}

\subsection{Proof of Theorem~\ref{th:non-linear}} \label{sec: proof th non linear}
\begin{proof}
Let us start by defining some properties for the non-linearity:
 Assume the non-linear function $\phi$ is continuously twice differentiable near $0$ and has no bias i.e. $\phi(0) = 0.$ Then via scaling we can assume WLOG that $\phi'(0) = 1$. As $\abs{\phi''(x)} \leq c$, we get that \footnote{We can actually also use the weaker assumption that $\phi''(0)$  is continuous at $0$. Thus there is some bounded (compact) set $A$ containing $0$ and a constant $c$ such that $\forall x \in A$, $\abs{\phi(x) - x} \leq \frac{cx^2}{2}$}
    \begin{align} \label{eq:taylor}
        \abs{\phi(x) - x} \leq \frac{cx^2}{2}.
    \end{align}
    Recall that the mapping of the first weight matrix is given by
    $
        W_1:\bbR^d\rightarrow \bbR^h,\quad h\gg d 
    $
    under the constraint that $W_1^\top W = \bI$. Under uniformly random initialization by Lemma~\ref{lem:mxconc} (see proof below) then with probability asymptotically going to $1$ we have that \[\max{(W_1)^2_{i,j}} \leq C\frac{\log^2 h}{h}\]
    Thus the norm of each row of $W_1$ we get with a.w.h.p. :
    \[\norm{\row_i\left(W_1\right)}^2 = \sum_{j=1}^d (W_1)^2_{i,j} \leq C\frac{d\log^2 h}{h}\]
    From there we can now write the value of each node in the layer using Cauchy-Schwarz inequality as
    \begin{align} \label{eq:CSIsq}
\left| \row_i{(W_1)}\cdot x \right|^2
\leq\norm{\row_i\left(W_1\right)}^2\norm{x}^2
\leq C\norm{x}^2 \frac{d\log^2 h}{h}.
    \end{align}
    We now apply the non-linearity to this quantity and denote the output of the first layer after the non-linearity as
    \begin{align*}
        v_i &= \phi\left(\row_i\left(W_1\right)\cdot x\right)
    \end{align*}
   Define the vector $\epsilon \in \bbR^h$, where \[\epsilon_j = v_i - \row_i\left(W_1\right)\cdot x\]
   Then we have for $h$ large enough\footnote{Note that for the weaker assumption we can still use equation~\ref{eq:taylor}. This is because by equation~\ref{eq:CSIsq},w.h.p. $\row_i{(W_1)}\cdot x$ goes to $0$ and thus $\row_i{(W_1)}\cdot x \in A$ in limit}: 
   \begin{align*}
           \norm{\epsilon}^2 &= \sum_{i=1}^h \epsilon_i^2 \\
           &= \sum_{i=1}^h (v_i - \row_i\left(W_1\right)\cdot x)^2 \\
           &\le \sum_{i=1}^h \frac{c^2}{4}\left(\row_i\left(W_1\right)\cdot x\right)^4 
           &\textit{by equation \ref{eq:taylor}} \\
           &\le \sum_{i=1}^h \frac{c^2}{4}\left(C\norm{x}^2 \frac{d\log^2 h}{h}\right)^2 &\textit{by equation \ref{eq:CSIsq}} \\
           &= K^2c^2\norm{x}^4\frac{hd^2\log^4 h}{h^2} = K^2c^2\norm{x}^4\frac{d^2\log^4 h}{h} ,
    \end{align*} 
    where $K$ is the universal constant $\frac{C}{2}.$
Combining this with the second layer we get the difference of the outputs of the two networks as
\begin{align*}
    \norm{u_{(0)}^\phi - u_{(0)}^{\bI}} &= \norm{W_2^\top v  - W_2^\top W_1x}\\
    &= \norm{W_2^\top \left(v  - W_1x\right)} \\
    &\leq \norm{W_2}\norm{\epsilon} 
    = \norm{\epsilon} &\textit{as $\norm{W_2} =1$}\\
    &\leq Kc\norm{x}^2d \sqrt{\frac{\log^4 h}{h}} \\
    &\rightarrow 0.
\end{align*}
\end{proof}

\begin{lemma} \label{lem:mxconc}
    Given any $d \leq p$, Let $Q$ be a uniformly random $h \times d$ semi-orthonormal matrix. I.e. $Q$ is the first $d$ columns of an uniformly random $h \times h$ orthonormal matrix. Then there are constants $L$ and a sequence $\epsilon_p$ converging to $0$ as $h$ goes to infinity such that , 
    \[
      P\left(\max \abs{Q_{i,j}} \geq \frac{L\log h}{\sqrt{h}}\right) \leq \epsilon_n
    \]
\end{lemma}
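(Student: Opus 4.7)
The plan is to reduce each entry $Q_{i,j}$ to the ratio of a standard Gaussian coordinate and the norm of a Gaussian vector via rotational invariance of the Haar measure, apply standard concentration for each factor, and then take a union bound over the at most $hd \le h^2$ entries.

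First I would identify the marginal distribution. Since $Q$ consists of the first $d$ columns of a Haar-distributed $h \times h$ orthogonal matrix $O$, and the Haar measure is invariant under both left and right multiplication by permutation matrices, the distribution of $O_{i,j}$ is independent of $(i,j)$. Hence every $Q_{i,j}$ has the marginal distribution of the first coordinate of a uniformly random unit vector on $S^{h-1}$, which can be written as
\begin{equation*}
Q_{i,j} \eqd \frac{g_1}{\|g\|}, \qquad g \sim \calN(0, \bbI_h).
\end{equation*}

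Second, I would control the numerator and denominator separately on a good event. Standard $\chi^2$ concentration (e.g.\ the Laurent--Massart bound) gives $\|g\|^2 \ge h/2$ with probability at least $1 - e^{-c h}$ for an absolute constant $c > 0$; the standard Gaussian tail gives $P(|g_1| \ge t) \le 2 e^{-t^2/2}$. On the intersection of the two good events, $|g_1|/\|g\| \le |g_1|\sqrt{2/h}$, so taking $t = L\log h/\sqrt{2}$ yields
\begin{equation*}
P\!\left(|Q_{i,j}| \ge \frac{L \log h}{\sqrt{h}}\right) \le 2\exp\!\left(-\frac{L^2 \log^2 h}{4}\right) + e^{-c h}.
\end{equation*}
A union bound over the (at most $h^2$) entries of $Q$ then gives
\begin{equation*}
P\!\left(\max_{i,j} |Q_{i,j}| \ge \frac{L \log h}{\sqrt{h}}\right) \le h^2 \left( 2\exp\!\left(-\tfrac{L^2 \log^2 h}{4}\right) + e^{-c h}\right) =: \epsilon_h,
\end{equation*}
and $\epsilon_h \to 0$ super-polynomially as $h \to \infty$, for any fixed $L > 0$.

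I do not anticipate a substantive obstacle. The only step worth double-checking is the marginal identification via invariance (it relies on the Haar measure being fixed by permutations on both sides), but this is standard. The claimed bound $O(\log h/\sqrt{h})$ is in fact quite loose relative to the sharp $O(\sqrt{\log h}/\sqrt{h})$ one gets from Gaussian tails, which gives ample slack for the union bound.
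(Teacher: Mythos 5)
Your proof is correct and takes essentially the same route as the paper: reduce each entry to $g_1/\|g\|$ for Gaussian $g$ via rotational invariance of the Haar measure, control the numerator by a Gaussian tail and the denominator by $\chi^2$ concentration, then union-bound. The only cosmetic difference is that you union-bound once over all $h^2$ entries, while the paper first bounds $\max_i|X_i|$ within a column and then union-bounds over columns.
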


\begin{proof}
    We note that it is enough to prove the claim when $d=h$, i.e. $Q$ is uniformly random $h \times h$ orthonormal matrix. Then as our distribution is uniform, the density at any particular $Q$ is same as the density at any $UQ$ where $U$ is some other fixed orthogonal matrix. Thus if $q_1$ is the first column of $Q$, the marginal distribution of $q_1$ has the property that its density at any $q_1$ is same as that of $U q_1$ for any orthogonal matrix $U$. In other words the marginal distribution for any column of $Q$ is simply that of the uniform unit sphere.

    Consider then the following random variable which has the same distribution as that of a fixed column of $Q$ i.e. uniform unit $h$-sphere. Let $X = (X_1, ..., X_h)$ be iid random variables from $\calN(0,1)$. Then we know that $X \sim \calN(0,\bI_h)$. From the rotational symmetry property of standard gaussian then we have that $\frac{X}{\norm{X}}$ is distributed as an uniform sample from the unit sphere in $h$ dimensions. By union bound then, we have 
    \begin{align*}
        &P\left(\max_{1 \leq i \leq h} \abs{X_i} \geq t\log h\right) \leq \frac{1}{\sqrt{2\pi}}he^{-\frac{t^2\log^2 h}{2}} \\
        \implies &P\left(\max_{1 \leq i \leq h} \abs{X_i} \leq t\log h\right) \geq 1 - \frac{1}{\sqrt{2\pi}}he^{-\frac{t^2\log^2 h}{2}}.
    \end{align*}
    As each $X_i$ is iid normal, $X_i^2$ is iid Chi-square with $\bbE[X_i^2] = 1$, thus by Chernoff there exists constants $C',c'$ such that \[P\left(\frac{\sum_{i=1}^h X_i^2}{h} \geq 1-s  \right) \geq 1 - C'e^{-c'hs^2}.  \]
    Since $\max_{1 \leq i \leq h} \abs{X_i} \leq t\log h$ and $\frac{\sum_{i=1}^h X_i^2}{h} \leq (1+s)$ implies that $\max_{1 \leq i \leq h} \frac{\abs{X_i}}{\norm{X}} \leq \frac{t \log h}{\sqrt{h(1-s)}}$, we get that 
    \begin{align*}
        &P\left(\max_{1 \leq i \leq h}\frac{\abs{X_i}}{\norm{X}} \leq \frac{t \log h}{\sqrt{h(1-s)}} \right) \geq 1 - \frac{1}{\sqrt{2\pi}}he^{-\frac{t^2\log^2 h}{2}}  - C'e^{-c'hs^2} \\
        \implies &P\left(\max_{1 \leq i \leq h} \frac{\abs{X_i}}{\norm{X}} \geq \frac{t \log h}{\sqrt{h(1-s)}} \right) \leq \frac{1}{\sqrt{2\pi}}he^{-\frac{t^2\log^2 h}{2}}  + C'e^{-c'hs^2}
    \end{align*}
    From the argument before that any $j$'th column of $Q$ is distributed as $X$. Using the above and another union bound then get us
    \begin{align*}
        &P\left(\max_{1 \leq i \leq h} \max_{1 \leq i \leq h} \abs{Q_{i,j}} \geq \frac{t \log h}{\sqrt{h(1-s)}} \right) \leq \frac{1}{\sqrt{2\pi}}he^{-\frac{t^2\log^2 h}{2}}  + C'e^{-c'hs^2} \\
        \implies &P\left( \max_{1 \leq j \leq h} \max_{1 \leq i \leq h} \abs{Q_{i,j}} \geq \frac{t \log h}{\sqrt{h(1-s)}} \right) \leq \frac{1}{\sqrt{2\pi}}h^2e^{-\frac{t^2\log^2 h}{2}}  + C'he^{-c'hs^2}
    \end{align*}
    We note that for any constants $t,c'$ that as $h$ goes to infinity, both $h^2e^{-\frac{t^2\log^2 h}{2}}$ and $he^{-c'hs^2}$ goes to zero. The proof is then finished by choosing some appropriate constants $s,t \geq 0$.
\end{proof}

 \subsection{Proof of Theorem~\ref{th: linear dynamics}}\label{sec: proof linear}
\begin{proof}
To simplify notation we are dropping the superscript $\bbI$ from $u_{(t)}^{\bbI}$. The $u$ in the following proof is already presumed to be linear. For the same reason we are also dropping the symbol of time,  $t$, from $u,W_2,W_1$ even though all of them are indeed time dependent. Finally for any time dependent function $f$, we denote $\frac{\partial f}{\partial t}$ by $\mathring{f}$. 

From \citep{edelman1998geometry}, we get that the derivative of a function $\gamma$ restricted to a grassmanian is derived by left-multiplying $1-\gamma \gamma^\top$ to the "free" or unrestricted derivative of $\gamma$. Using this and recalling that the loss in Eq.~\ref{eq: general form} is given by 
\begin{align*}
 \loss =   \Tr\left(W_2^\top W_1CW_1^\top W_2\right),
\end{align*}
we therefore can write $\mathring{W_1}$ and $\mathring{W_2}$ as 
\begin{align*}
    \mathring{W}_2(t)
    &=-\left(\bI - W_2W_2^\top \right)\nabla_{W_2}\loss
    =-2\left(\bI - W_2W_2^\top \right)\left(W_1{C}W_1^\top W_2\right)\\
    \mathring{W}_1(t)
    &=-\left(\bI - W_1W_1^\top \right)\nabla_{W_1}\loss
    =-2\left(\bI - W_1W_1^\top \right)\left(W_2W_2^\top W_1C\right).
\end{align*}
Thus we obtain
\begin{align*}
     \frac{\partial~u_{(t)}(x)}{\partial~t}  =& \mathring{W}_2(t)^\top W_1(t)x +  W_2(t)^\top \mathring{W}_1(t)x \\
    =&\left(\left(\bI - W_2W_2^\top \right)\left(-2W_1{C}W_1^\top W_2\right)\right)^\top W_1(t)x \\
    &+  W_2(t)^\top \left(\bI - W_1W_1^\top \right)\left(-2W_2W_2^\top W_1C\right)x\\
     =& -2\left(
 W_2^\top W_1C\cancel{W_1^\top W_1}x 
 ~ + ~ \cancel{W_2^\top W_2}W_2^\top W_1Cx \right)\\
 & +2\left(W_2^\top W_1{C}W_1^\top W_2W_2^\top W_1x ~ + ~
 W_2^\top  W_1W_1^\top W_2 ~ W_2^\top W_1Cx\right)\\    
 =& -2\left( 2W_2^\top W_1Cx - W_2^\top W_1{C}W_1^\top W_2W_2^\top W_1x ~ - ~
 \sum^d_iW_2^\top  W_1 v_i v_i^\top  W_1^\top W_2 ~ W_2^\top W_1Cx\right),
\end{align*}
where we obtain the second equality by expanding the terms, taking advantage of $W_2^\top W_2 = \bI, W_1^\top W_1 = \bI$ and $\bI_d = \sum_i^dv_iv_i^\top $.
Now setting $x$ as $v_j$ and using the fact that they are eigenvectors for $C$ and using $ C = \sum_i^d \lambda_i v_iv_i^\top $  gives us:
\begin{align*}
\mathring{u}(v_j) = & -2\left(2\lambda_j u_{(t)}(v_j) -  \sum^d_i \lambda_i u_{(t)}(v_i) u_{(t)}(v_i)^\top u_{(t)}(v_j) - \lambda_j \sum^d_i u_{(t)}(v_i) u_{(t)}(v_i)^\top u_{(t)}(v_j) \right)
\end{align*}
Let's rewrite this in matrix notation. First define $q: = \left[ u(v_1), \dots  u(v_d)\right]^\top $ thus obtaining: 
\begin{align*}
 \mathring{q} = -2\big[2 \Lambda q -  \Lambda q q^\top q  -  q q^\top \Lambda q\big]
\end{align*}
which concludes the proof.
\end{proof}

\subsection{Proof of Theorem~\ref{th: convergence}}
\begin{proof}

For instance first suppose that all the eigenvalues of $\Lambda$ are strictly positive and thus $q^\top \Lambda q > 0$. Then 
\begin{align*}
    \frac{d (q^\top q)}{dt} &= 2 q^\top\mathring{q} = 4\big[-(1-q^\top q)q^\top\Lambda q -q^\top(\bI-q q^\top)\Lambda q\big] \\
    &= -8(1-q^\top q) q^\top \Lambda q
\end{align*}

Observing now that because of orthonormality of our weight matrices, $q^\top q = \norm{q}^2 \leq 1$ we get that the derivative of $\norm{q}^2$ is always negative and thus $q$ converges to $0$. 

Now suppose on the other hand there is atleast one negative eigenvalue. WLOG let $e_1$ denote the eigenvector  with the smallest eigenvalue (which is negative). Then 
\begin{align*}
    \frac{d (e_1^\top q)}{dt} &= e_1^\top\mathring{q} = 2\big[-(1-q^\top q)e_1^\top\Lambda q -e_1^\top(\bI-q q^\top)\Lambda q\big] \\
    &= 2\big[(1-q^\top q)(-\lambda_1) e_1^\top q + (q^\top \Lambda q - \lambda_1)e_1^\top q)\big]
\end{align*}
We now note that $q^\top \Lambda q - \lambda_1 \geq 0$ as $\lambda_1$ is the smallest eigenvalue. Thus as $-\lambda_1$ is positive, the derivative of $e_1^\top q$ is always positive unless $1-q^\top q = q^\top \Lambda q - \lambda_1 = 0$, which only happens at $q = e_1$. In other words, eventually $q$ becomes the smallest eigenvector $e_1$.
\end{proof}
\end{document}